\documentclass[10pt]{article} % For LaTeX2e

% \usepackage{tmlr}

% If accepted, instead use the following line for the camera-ready submission:
% \usepackage[accepted]{tmlr}

% To de-anonymize and remove mentions to TMLR (for example for posting to preprint servers), instead use the following:
\usepackage[preprint]{tmlr}

% Optional math commands from https://github.com/goodfeli/dlbook_notation.
%%%%% NEW MATH DEFINITIONS %%%%%

\usepackage{amsmath,amsthm,amsfonts,amssymb,bm}

% Mark sections of captions for referring to divisions of figures

% Highlight a newly defined term

% Figure reference, lower-case.

% Figure reference, capital. For start of sentence

% Section reference, lower-case.

% Section reference, capital.

% Reference to two sections.

% Reference to three sections.

% Reference to an equation, lower-case.
\def\eqref#1{equation~\ref{#1}}
% Reference to an equation, upper case

% A raw reference to an equation---avoid using if possible

% Reference to a chapter, lower-case.

% Reference to an equation, upper case.

% Reference to a range of chapters

% Reference to an algorithm, lower-case.

% Reference to an algorithm, upper case.

% Reference to a part, lower case

% Reference to a part, upper case

\def\1{\bm{1}}

% Random variables

% rm is already a command, just don't name any random variables m

% Random vectors

% Elements of random vectors

% Random matrices

% Elements of random matrices

% Vectors

% Elements of vectors

% Matrix

% Tensor
\DeclareMathAlphabet{\mathsfit}{\encodingdefault}{\sfdefault}{m}{sl}
\SetMathAlphabet{\mathsfit}{bold}{\encodingdefault}{\sfdefault}{bx}{n}

% Graph

% Sets

% Don't use a set called E, because this would be the same as our symbol
% for expectation.

% Entries of a matrix

% entries of a tensor
% Same font as tensor, without \bm wrapper

% The true underlying data generating distribution

% The empirical distribution defined by the training set

% The model distribution

% Stochastic autoencoder distributions

 % Laplace distribution

\newcommand{\E}{\mathbb{E}}

% Wolfram Mathworld says $L^2$ is for function spaces and $\ell^2$ is for vectors
% But then they seem to use $L^2$ for vectors throughout the site, and so does
% wikipedia.

 % See usage in notation.tex. Chosen to match Daphne's book.

\DeclareMathOperator*{\argmin}{arg\,min}

% ranking paper math

% \DeclareMathOperator*{\E}{\mathbb{E}}

% \newtheorem{theorem}{Theorem}
% \newtheorem{prop}{Proposition}
% \newtheorem{lemma}{Lemma}
% \newtheorem{coro}{Corollary}
% \theoremstyle{remark}
% \newtheorem*{example}{Example}
% \newtheorem*{remark}{Remark}

\theoremstyle{plain}
\newtheorem{thm}{Theorem}[section]
\newtheorem{lem}[thm]{Lemma}

\newtheorem{cor}[thm]{Corollary}
\newtheorem{clm}[thm]{Claim}
\newtheorem{asm}[thm]{Assumption}

\theoremstyle{definition}

\theoremstyle{remark}

% additional

\newcommand{\Dkl}{\mathrm{D}_\mathrm{KL}}

\newcommand{\norm}[1]{\left\lVert#1\right\rVert}
\newcommand{\innerproduct}[2]{\langle #1, #2 \rangle}

\usepackage{booktabs}       % professional-quality tables

\usepackage{graphics,color}

\usepackage{algorithm}
\usepackage{algorithmic}
\usepackage{enumitem}

% Command for first sentence of each paragraph (it makes it bold)

\usepackage{xspace}
% Add a period to the end of an abbreviation unless there's one
% already, then \xspace.
\makeatletter
\DeclareRobustCommand\onedot{\futurelet\@let@token\@onedot}
\def\@onedot{\ifx\@let@token.\else.\null\fi\xspace}
\makeatother

% hline for normal tables (I would recommend to use booktab anyway)

    % use for tables to get enough space after \hline

% cross referencing between files (main and appendix)
\usepackage{xr-hyper}
\makeatletter
\newcommand*{\addFileDependency}[1]{% argument=file name and extension
  \typeout{(#1)}
  \@addtofilelist{#1}
  \IfFileExists{#1}{}{\typeout{No file #1.}}
}
\makeatother

% color-code
\usepackage{xcolor}
\definecolor{ourblue}{rgb}{0.368,0.507,0.71}
\definecolor{ourorange}{rgb}{0.881,0.611,0.142}
\definecolor{ourgreen}{rgb}{0.56,0.692,0.195}
\definecolor{ourred}{rgb}{0.923,0.386,0.209}
\definecolor{ourviolet}{rgb}{0.528,0.471,0.701}
\definecolor{ourbrown}{rgb}{0.772,0.432,0.102}
\definecolor{ourlightblue}{rgb}{0.364,0.619,0.782}
\definecolor{ourdarkolive}{rgb}{0.572,0.586,0.}
\definecolor{ourpurple}{rgb}{0.528,0.471,0.701}
%\definecolor{ourdarkblue}{RGB}{28,99,148}
%\definecolor{ourdarkred}{RGB}{169,53,17}

\definecolor{ourdarkbrown}{rgb}{0.618,0.348,0.0816}
\definecolor{ourcyan}{rgb}{0.364,0.619,0.782}
\definecolor{ourgrey}{rgb}{0.75,0.75,0.75}

\definecolor{ourdarkred}{rgb}{0.67, 0.22, 0.07}
\definecolor{ourdarkorange}{rgb}{0.71, 0.49, 0.1}
\definecolor{ourdarkblue}{rgb}{0.27, 0.4, 0.58}
\definecolor{ourdarkgreen}{rgb}{0.41, 0.51, 0.15}

% color-code 2
\definecolor{ourcyan2}{rgb}{0.125,0.722,0.804}
\definecolor{ourred2}{rgb}{0.863,0.184,0.047}
\definecolor{ouryellow2}{cmyk}{0,0.16,1.0,0.07}
\definecolor{ourviolet2}{cmyk}{0.55,0.56,0,0.47}
\definecolor{ourorange2}{cmyk}{0,0.46,0.89,0.11}

\definecolor{chigh}{RGB}{251,209,36}
\definecolor{cmiddle}{RGB}{185,51,136}
\definecolor{clow}{RGB}{81,1,162}

% Custom
\usepackage{authblk}
\usepackage{dashrule}

\usepackage{tikz}
\usetikzlibrary{decorations.markings}
\newcommand{\hdottedrule}[3][0]{%
	\tikz[baseline]{\path[decoration={markings,
			mark=between positions 0 and 1 step 2*#3
			with {\node[fill, circle, minimum width=#3, inner sep=0pt, anchor=south west] {};}},postaction={decorate}]  (0,#1) -- ++(#2,0);}}

\usepackage{hyperref}
\usepackage{url}
\usepackage{cleveref}
\usepackage{etoolbox}

% The theorem environments are already defined elsewhere and share the
% counter `thm`. Tell cleveref which environment type is active when a
% label is created, without redefining any theorem environment.
\AtBeginEnvironment{thm}{\crefalias{thm}{thm}}
\AtBeginEnvironment{lem}{\crefalias{thm}{lem}}
\AtBeginEnvironment{asm}{\crefalias{thm}{asm}}
\AtBeginEnvironment{cor}{\crefalias{thm}{cor}}

\usepackage{bbm}
\usepackage{algorithm}
\usepackage{cuted}
\usepackage{breqn}
\usepackage{booktabs}       %
\usepackage{nicefrac}       %
\usepackage{microtype}      %

% added
%\usepackage[utf8]{inputenc} % allow utf-8 input
%\usepackage[T1]{fontenc}    % use 8-bit T1 fonts

\usepackage{amsfonts}       % blackboard math symbols
\usepackage{nicefrac}       % compact symbols for 1/2, etc.
\usepackage{microtype}      % microtypography
\usepackage{xcolor}         % colors
\usepackage{subcaption}
\usepackage{svg}
\usepackage{wrapfig}
\usepackage{graphicx}
\usepackage{lipsum}
\usepackage{mathtools}

\definecolor{blue}{HTML}{4d71a6}
\definecolor{green}{HTML}{2e7647}
\definecolor{brown}{HTML}{6d5959}
\definecolor{orange}{HTML}{DE9102}
\definecolor{red}{HTML}{ff4e33}

% d4rl shades '#aec0da', '#7d9ac4', '#4f75ac', '#38547b'
\definecolor{sc1}{HTML}{aec0da}
\definecolor{sc2}{HTML}{7d9ac4}
\definecolor{sc3}{HTML}{4f75ac}
\definecolor{sc4}{HTML}{38547b}

\crefname{section}{Section}{Sections}
\crefname{appendix}{Supplementary}{Supplementary}
\crefname{figure}{Figure}{Figures}
\crefname{thm}{Theorem}{Theorems}
\crefname{lem}{Lemma}{Lemmas}
\crefname{asm}{Assumption}{Assumptions}
\crefname{cor}{Corollary}{Corollaries}

\newcommand{\envsolo}{\textsc{Solo12} }

\newif\ifcomments
\commentstrue % comment out to remove inline comments

\newcommand{\diversity}{\mathrm{Diversity}}

\definecolor{darkblue}{HTML}{228BBB}
\definecolor{darkorange}{HTML}{E39F40}
\newif\ifmcolor
\mcolortrue % comment out to remove inline comments

\definecolor{purple}{RGB}{128,0,128}

\usepackage{hyperref}
\hypersetup{
	colorlinks=true,
	linkcolor=blue,
	filecolor=magenta,      
	urlcolor=cyan,
	citecolor=blue
}

\title{Dual-Force: Enhanced Offline Diversity Maximization\\ under Imitation Constraints}

% Authors must not appear in the submitted version. They should be hidden
% as long as the tmlr package is used without the [accepted] or [preprint] options.
% Non-anonymous submissions will be rejected without review.

\author[1]{\textbf{Pavel Kolev}}
\author[1,2]{\textbf{Marin Vlastelica}}
\author[1]{\textbf{Georg Martius}}

\affil[1]{University of Tübingen and Tübingen AI Center, Germany}
\affil[2]{ETH Zürich, Switzerland}

  % Insert correct month for camera-ready version
 % Insert correct year for camera-ready version

\begin{document}

\maketitle

\begin{abstract}
    Offline diversity maximization under imitation constraints can transform demonstration data into a set of distinct behavioral policies, improving robustness to distribution shift without additional environment interaction.
    In practice, however, existing offline approaches often rely on mutual-information objectives that require training a skill discriminator and can become unstable under the non-stationary rewards induced by alternating Lagrangian optimization.
    We introduce \textsc{Dual-Force}, an offline algorithm that (i) maximizes diversity using an off-policy estimator of a Van der Waals (VdW) force objective computed from successor features, eliminating the skill discriminator, and (ii) stabilizes training under non-stationary intrinsic rewards by conditioning the value function and policy on a pre-trained Functional Reward Encoding (FRE).
    The FRE code also enables zero-shot recall of every encountered skill via its associated latent representation, removing the need to pre-specify a fixed number of skills.
    On two \envsolo simulation benchmarks (locomotion and obstacle navigation), \textsc{Dual-Force} recovers diverse high-performing behaviors while matching a target expert \emph{state} occupancy and improves robustness in adversarial obstacle variations.\footnote{Project website with videos: \href{https://sites.google.com/view/dual-force/home}{https://tinyurl.com/dual-force}}
\end{abstract}

\section{Introduction}\label{sec:intro}

Learning from demonstration data has become a cornerstone of large-scale learning systems, largely due to the sheer volume of available data from sources like videos and robots. 
However, simply mimicking demonstrations is not enough. 
There are several critical reasons why we need to move beyond naive imitation. 
First, demonstrations are often not ``ego-centric'', meaning their state space might not directly align with the agent's, requiring careful adaptation. 
Second, agents often cannot perfectly replicate demonstrations due to their own physical or computational limitations~\citep{li2023learning}.
This highlights the need for agents to extract diverse behaviors that are adaptable to their capabilities, rather than just exact copies of the demonstrations \citep{vlastelica2024doi}. 
Furthermore, robustness to distribution shifts is crucial. 
Since tasks can be solved in multiple ways, extracting diverse policies allows us to identify and select the most robust alternatives \citep{vlastelica2024doi}.
While encouraging risk-averse behavior presents an orthogonal approach to robustness \citep{vlastelica22razer}, our focus is on leveraging diversity to unlock more adaptable and resilient learning from demonstrations.

Maximizing diversity under various constraints is a critical challenge in Reinforcement Learning (RL), frequently addressed within the Constrained Markov Decision Process (CMDP) framework \citep{zahavy2023discovering,vlastelica2024doi,cheng2024dominic}. 
These approaches typically solve the underlying constrained optimization problem using Lagrangian relaxation, employing a two-phase alternating scheme where constraints are incorporated into the reward objective and scaled by adaptive Lagrangian multipliers to reduce violations \citep{zahavy2023discovering,cheng2024dominic}. 
While effective in online settings \citep{zahavy2023discovering,cheng2024dominic}, applying these techniques to offline learning presents unique challenges. 
Our work specifically targets the offline setting, where direct environment interaction is absent. 
Crucially, instead of enforcing near-optimal returns, we impose imitation-based constraints, a strategy previously explored by~\citet{vlastelica2024doi}.
Our method for offline learning from demonstrations builds upon the powerful Fenchel duality theory, specifically adapted for RL within the DIstribution Correction Estimation (DICE) framework \citep{nachum2020reinforcement,kim2022demodice,ma2022smodice,maYJB22goalbasedSmodice}. 
This foundation allows us to address the complexities of diversity maximization in real-world, data-constrained environments.

\begin{figure}
	\centering
	\includegraphics[width=0.85\linewidth]{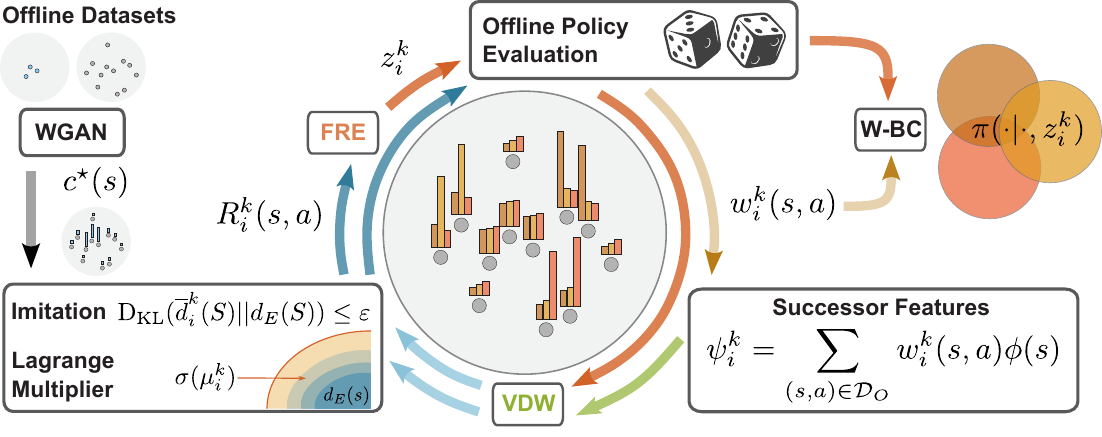}
	\caption{Illustration of \textsc{Dual-Force}. The pseudocode is presented in \Cref{alg:dual-force}.} 
	\label{fig:overview}
\end{figure}

Prior approaches to learning diverse skills in RL, such as the method proposed by \citet{vlastelica2024doi}, face significant practical challenges when applied in offline settings. 
While their work introduces a diversity objective based on a variational lower bound on mutual information (which necessitates a skill-discriminator), this design choice introduces several complications for practitioners.
Specifically, the reliance on a skill-discriminator leads to training instability: updating the policy and skill discriminator in a single step within an offline setting provides less accurate policy estimates compared to online Monte Carlo sampling \citep{eysenbach19diayn}. 
This inaccuracy, combined with a non-stationary reward (due to Lagrange multipliers and the skill-discriminator), often results in a discriminator that struggles to accurately differentiate between skills. 
Although an additional information gain term \citep{strouse2021disdain} can mitigate this, its effect quickly diminishes in offline scenarios. 
\citet{vlastelica2024doi} acknowledge this difficulty in training the skill discriminator as a major practical hurdle.
Furthermore, their DOI algorithm violates the stationary reward assumption within the DICE framework, which can lead to unstable training of the value function.
Moreover, a common limitation in existing skill-discovery methods, including those by \citep{zahavy2023discovering} and \citep{vlastelica2024doi}, is the requirement to predefine the number of skills. 
This not only limits the adaptability of the learned policy, often resulting in the forgetting of previously acquired skills in favor of more recent ones, but also introduces a runtime complexity that scales linearly with the number of skills.

In this work, we introduce \textsc{Dual-Force}, a novel offline algorithm engineered to overcome the limitations of previous approaches. 
Our core innovation lies in developing an off-policy evaluation procedure for a physically inspired diversity objective. 
Specifically, we leverage the Van der Waals (VdW) force \citep{zahavy2023discovering} to achieve enhanced diversity. 
This crucial design choice eliminates the need to learn a separate skill discriminator, a significant practical challenge in prior methods like \citet{vlastelica2024doi}, while providing a robust diversity signal even in the challenging offline setting. 

\textbf{Key idea.} \textsc{Dual-Force} makes the VdW diversity signal practical in offline imitation-constrained skill discovery by estimating, off-policy, all quantities needed for the VdW force (successor features and dual-conjugate variables) using a DICE-style importance sampling procedure.
To address the non-stationary rewards created by alternating Lagrangian optimization, we condition the value function and policy on a pre-trained Functional Reward Encoding (FRE)~\citep{frans24fre}, which stabilizes learning and allows us to \emph{index and recall} every reward/skill encountered during training via its latent code.

\newpage
\textbf{Contributions.}
\begin{itemize}[topsep=0pt, itemsep=0.25pt, parsep=0.25pt]
  \item \textbf{Discriminator-free offline diversity:} an off-policy estimator for the VdW-force objective that removes the skill discriminator used by mutual-information objectives, improving stability in the offline setting.
  \item \textbf{Non-stationarity handling + recall:} FRE-conditioned value/policy learning that mitigates moving-target instability and enables zero-shot recall of skills via stored latent codes, removing dependence on a fixed ``number of skills''.
  \item \textbf{Robotics evaluation:} empirical results on \envsolo locomotion and obstacle navigation demonstrating diverse behaviors under state-only imitation constraints and improved robustness under obstacle perturbations.
\end{itemize}

\paragraph{Roadmap.}
The remainder of the paper is organized as follows. We review background in \Cref{sec:prel}, formalize the objective and constraints in \Cref{sec:problemFormulation}, present our method in \Cref{sec:method}, and evaluate the learned skills set on benchmarks in \Cref{sec:experiments}. We then discuss related work in \Cref{sec:rel-work} and conclude in \Cref{sec:conc}.

\section{Preliminaries}\label{sec:prel}
We frame RL problems using the framework of Markov Decision Processes (MDPs)~\citep{puterman2014markov}.
An MDP is defined by the tuple
$(\mathcal{S}, \mathcal{A}, \mathcal{R}, \mathcal{P}, \rho_0, \gamma)$, representing the state space $\mathcal{S}$, action space $\mathcal{A}$, a reward function $\mathcal{R}$, a state transition probability function $\mathcal{P}$, an initial state distribution $\rho_0$, and a discount factor  $\gamma$. 
A crucial concept for understanding RL objectives is the state-action occupancy measure, $d_{\pi}(s,a)$, which quantifies the discounted expected number of times a particular state-action pair $(s,a)$ is visited when following policy $\pi$. 
This allows us to reframe the standard RL objective as maximizing the inner product between the occupancy measure and the reward function \citep{ZahavyODS21}, namely $\max_{d_\pi \in \mathcal{K}}\innerproduct{d_{\pi}}{r}$.
In this work, we specifically consider a diversity objective, which takes as input $n$ state-action occupancies $(d_1,\dots,d_n)$, each induced by a distinct policy $\pi_i$.

\noindent\textbf{Offline data access.}
We only assume \emph{state-only} expert samples $\mathcal{D}_E\sim d_E(S)$ (no expert actions) and a separate offline dataset $\mathcal{D}_O\sim d_O(S,A)$ collected by a mixture of behaviors.
Unless stated otherwise, expectations over $d_O$ and $d_E$ are implemented as empirical averages over $\mathcal{D}_O$ and $\mathcal{D}_E$, respectively.

\subsection{Constrained Markov Decision Process (CMDP)}

A critical challenge in RL is to generate not just a single optimal policy, but a diverse set of high-performing policies.
Let $v_e^* := \langle d_{\pi_e^*}, r_e \rangle$ denote the extrinsic value of an expert policy $\pi_e^*$. 
\citet{zahavy2023discovering} explored a CMDP formulation for computing a set of policies $\Pi^{n}=\{\pi_{i}\}_{i=1}^{n}$ that simultaneously maximize a diversity objective while ensuring that each policy maintains a minimum level of extrinsic performance:
\begin{equation}\label{eq:constrained_mdp}
	\max_{\Pi^n} \ \diversity(\Pi^n) \ \text{subject to} \ \innerproduct{d_{\pi}}{r_e} \geq \alpha v^*_e,\quad \forall \pi \in \Pi^n,
\end{equation}
Here, $\alpha \in(0, 1]$ controls the required fraction of expert performance.
To handle diversity objectives, \citet{zahavy2023discovering} used an iterative optimization procedure that reduced the problem to solving a sequence of standard RL problems.
Each problem leverages an intrinsic reward $r_i^{k+1} = \nabla_{d_i}\diversity(\overline{d}_{1}^{k}, \dots, \overline{d}_{n}^{k})$, which is the gradient of the diversity objective evaluated at the time-averaged state-action occupancies $\overline{d}_{i}^{k}$ from the previous step.

To manage the interplay between extrinsic performance and diversity, a common strategy is to employ Lagrange relaxation.
This transforms the constrained optimization problem into an unconstrained RL problem where the reward function $r^{k+1} = r_e + \lambda_i r_i^{k+1}$ is a combination of the extrinsic reward and the intrinsic term scaled by a Lagrange multiplier $\lambda\geq0$
that balances the two rewards.
The Lagrange multipliers are dynamically adjusted to minimize a loss function
$\mathcal{L}_{\lambda}=\sum_{i=1}^{n}\lambda_{i}(\langle d_{i},r_{e}\rangle - \alpha v_{e}^{*})$ that penalizes constraint violations. 
Crucially, a Lagrange multiplier increases when its associated constraint is violated and decreases otherwise, providing an adaptive mechanism to balance the competing objectives. Practical implementations, such as those by \citet{StookeAA20} and \citet{cheng2024dominic}, often incorporate bounded Lagrange multipliers using a sigmoid activation.

\subsection{Functional Reward Encoding (FRE)}

The concept of Functional Reward Encoding (FRE), recently proposed by \citet{frans24fre}, offers a powerful paradigm for handling and generalizing across diverse reward functions.
Building on information bottleneck methods \citep{tishby00InfoBotneck,alemi17infoBotneck}, FRE utilizes a transformer-based variational auto-encoder to encode state-reward samples into a compact latent representation $z_r$.
The key idea behind FRE is that this latent representation, encoded from an arbitrary subset of state-reward samples, should be maximally compressive while remaining highly predictive for decoding rewards from other arbitrary subsets of state-reward samples.
This enables remarkable zero-shot capabilities: a latent-conditioned policy $\pi(\cdot|\cdot,z)$ once trained with a fixed, pre-trained FRE model (on a diverse set of reward functions), can optimize previously unseen downstream reward functions $r$ simply by encoding a few state-reward samples into a latent $z_r$.
Empirical evaluations in standard D4RL offline environments \citep{fu2020d4rl} have shown the effectiveness of FRE in enabling the policy to generalize to new, unseen reward functions without further training.

\section{Problem Formulation}\label{sec:problemFormulation}

We aim to solve the following optimization problem
\begin{eqnarray}\label{eq:main-problem}
\max_{d_{1},\dots,d_{n}}&\diversity(d_{1},\dots,d_{n})&\label{eq:constrained-problem-diversity}\\
\text{subject to}&\Dkl\left(d_{i}(S)||d_{E}(S)\right)\leq\varepsilon&\forall i\in\{1,\dots,n\},\label{eq:constrained-problem-constraints}
\end{eqnarray}
where $d_E(S)$ is a \emph{state-only} expert occupancy (expert actions are not observed).
Compared to prior offline constrained diversity maximization, we (i) use a diversity objective that admits a stable off-policy estimator without training a skill discriminator, and (ii) relax the imitation constraints in a way that preserves their state-only nature while allowing additional freedom for diversity.
This provides greater flexibility in maximizing diversity.

\subsection{Diversity Measures}\label{subsec:DiversityMeasures}

Prior work by \citet{vlastelica2024doi} used a variational lower bound on the mutual information $\mathcal{I}\left(S;Z\right)$ between states and latent skills, leading to the following diversity objective:
\begin{equation}\label{eq:VarLB}
\mathcal{I}\left(S;Z\right)\geq\mathbb{E}_{p(z),d_{z}(s)}\left[\log q(z|s)\right]+\mathcal{H}\left(p(z)\right)=\sum_{z\in Z}\mathbb{E}_{d_{z}(s)}\left[\frac{\log\left(|Z|q(z|s)\right)}{|Z|}\right],
\end{equation}
Here, $q(z\mid s)$ is implemented as a learned skill discriminator; in the offline setting, its training signal can degrade under moving targets and imperfect one-step policy updates, motivating discriminator-free diversity objectives.
In this setting, $p(z)$ is a categorical distribution over a discrete set $Z$ of $|Z|$ distinct indicator vectors in $\mathbb{R}^{|Z|}$ and $d_z(s):=d_{\pi_z}(s)$ is a state occupancy induced by a skill-conditioned policy $\pi_z$.
While their approach allows for off-policy estimation using a DICE importance sampling method, it requires learning a skill-discriminator, which leads to training instability in the offline setting.

In contrast, \citet{zahavy2023discovering} proposed a diversity objective rooted in a distance measure from \citep{abbeel2004apprenticeship}. Their method maximizes the minimum squared $\ell_2$ distance between successor features of different skills
\begin{equation}\label{eq:rep_force}
\max_{d_{1},...,d_{n}}\ \frac{1}{n}\sum_{i=1}^{n}\min_{j\neq i}\norm{\psi^{i}-\psi^{j}}_{2}^{2}.
\end{equation}
Specifically, given a feature mapping $\phi : \mathcal{S}\rightarrow \mathbb{R}^n$, successor features~\citep{dayan1993successor, barreto2017successor} are defined by $\psi_{i}=\mathbb{E}_{d_{i}(s)}[\phi(s)]$.
A significant advantage of this convex objective is that its gradient is readily available in closed form (derived in Lemma~\ref{lem:div_is_conv}), eliminating the need for a skill discriminator.

Furthermore, \citet{zahavy2023discovering} introduced a physically inspired objective based on Van der Waals (VdW) force, formulated as:
\begin{equation}\label{eq:vdw_force}
\max_{d_{1},...,d_{n}}\ 0.5\sum_{i=1}^{n}\ell_{i}^{2}-0.2(\ell_{i}^{5}/\ell_{0}^{3}),
\end{equation}
where $\ell_{i}:=\norm{\psi_{i}-\psi_{j_{i}^{\star}}}_{2}$ and $j_{i}^{\star}:=\argmin_{j\neq i}\norm{\psi_{i}-\psi_{j}}_{2}$.
We adopt this formulation in our work because it allows for the level of diversity to be precisely controlled by the parameter $\ell_0$.
When the successor features are in close proximity $\ell_i<\ell_0$, a repulsive force dominates, pushing them apart. 
Conversely, when $\ell_i>\ell_0$, an attractive force prevails, drawing them closer.
In the limit $\ell_0 \rightarrow \infty$, the formulation in \cref{eq:rep_force} is recovered.

\section{Method}\label{sec:method}

Traditional offline methods for learning diverse skills often face instability and are limited to a fixed number of skills. 
These challenges stem from their reliance on skill discriminators and difficulties in managing non-stationary rewards.
To overcome these, our novel method introduces a robust diversity objective inspired by Van der Waals force, effectively integrated with the DICE framework for efficient off-policy estimation and Functional Reward Encoding (FRE) to handle reward non-stationarity.
This approach guarantees stable training and significantly expands the set of learnable skills, scaling effectively with the number of iterations.

\paragraph{Method overview.}
\textsc{Dual-Force} solves Problem~\ref{eq:main-problem} via an alternating offline optimization loop that (i) computes a discriminator-free diversity signal from successor features, (ii) solves a KL-regularized offline RL subproblem via DICE to obtain occupancy ratios, (iii) trains policies with weighted behavior cloning, and (iv) updates bounded multipliers from an offline constraint-violation estimator. 
Since the reward changes across iterations, we stabilize value/policy learning by conditioning on a pre-trained FRE latent, which also enables recall of skills encountered during training.

\subsection{Van der Waals Force}

In offline diversity maximization under imitation constraints, the main practical obstacle is that both (i) the constraint-violation signal and (ii) many diversity signals (e.g., discriminator-based objectives) are difficult to estimate reliably without environment interaction, which destabilizes alternating optimization. Our key technical insight is that, under KL-divergence imitation constraints (cf.~\cref{eq:constrained-problem-constraints}), all relevant quantities needed for VdW-based diversity optimization, including dual variables and successor features, can be estimated off-policy using a DICE importance sampling procedure from the offline dataset.

\subsubsection{Relaxing the state-only imitation constraint for offline estimation}

For Problem~\ref{eq:main-problem}, we've set our diversity objective using the Van der Waals force as defined in \cref{eq:vdw_force}.
Our initial observation, formalized in Lemma~\ref{lem:state-KL-est}, reveals that the imitation constraints in ~\cref{eq:constrained-problem-constraints} can be effectively relaxed to:
\begin{equation}\label{eq:relaxed_imitation_constraint}
-\mathbb{E}_{d_{i}(s)}\left[\log\frac{d_{E}(s)}{d_{O}(s)}\right]+\Dkl\left(d_{i}(S,A)||d_{O}(S,A)\right)\leq\varepsilon,\qquad\forall i\in\{1,\dots,n\}.
\end{equation}
Crucially, this relaxation preserves the \emph{state-occupancy} nature of the constraint while exposing a state-action KL term that matches the DICE regularizer, enabling a single off-policy estimation pipeline.
This constitutes a tighter relaxation of the imitation constraints, crucial as it preserves the essential state-occupancy nature while still allowing for efficient computation. 
This approach is a significant improvement over enforcing the more restrictive state-action occupancy constraints that are typically tied to a fixed {\color{ourdarkblue} SMODICE-expert}~\citep{vlastelica2024doi}.

\subsubsection{VdW intrinsic reward from successor features (discriminator-free diversity)}

Drawing on similar arguments from \citep{zahavy2023discovering}, we developed an iterative procedure.
In each iteration $k+1$, we consider a Lagrange relaxation of Problem~\ref{eq:main-problem} for the $i^{\mathrm{th}}$ state-action distribution $d_i$:
\begin{equation}\label{eq:iterative-procedure-Lagrange-relaxation}
\min_{\lambda_{i}\geq0}\max_{d_{i}}\mathbb{E}_{d_{i}(s,a)}\left[\beta_{i}^{k}(s,a)\right]+\lambda_{i}\left[\mathbb{E}_{d_{i}(s,a)}\left[\log\frac{d_{E}(s)}{d_{O}(s)}\right]-\Dkl\left(d_{i}(S,A)||d_{O}(S,A)\right)\right].
\end{equation}
Here, $\lambda_i$ acts is a Lagrange multiplier and the term $\beta_{i}^{k}=\nabla_{d_{i}}\diversity(\overline{d}_{1}^{k},\dots,\overline{d}_{n}^{k})$ is a dual-conjugate variable.
Operationally, $\beta_i^k(s,a)$ is the intrinsic reward used in iteration $k\!+\!1$; it is the closed-form gradient of the VdW objective evaluated at the Polyak-averaged occupancies, eliminating the need for a learned skill discriminator.

In our specific context, with the diversity objective set to VdW force \cref{eq:vdw_force}, this simplifies to:
\[
\beta_{i}^{k}(s,a):=(1-(\ell_{i}^{k}/\ell_{0})^{3})\langle\phi(s),\psi_{i}^{k}-\psi_{j_{i}^{\star}}^{k}\rangle,
\]
where $\psi_{i}^{k}:=\mathbb{E}_{\overline{d}_{i}^{k}(s)}[\phi(s)]$, $\ell_{i}^{k}:=\Vert\psi_{i}^{k}-\psi_{j_{i}^{\star}}^{k}\Vert_{2}$ and $j_{i}^{\star}:=\argmin_{j\neq i}\norm{\psi_{i}^{k}-\psi_{j}^{k}}_{2}$ are all defined with respect to a time-averaged state-action distribution $\overline{d}_{i}^k=\frac{1}{t}\sum_{t=1}^{k}d_i^t$.

Next, we apply Fenchel duality to solve offline the inner maximization problem in \cref{eq:iterative-procedure-Lagrange-relaxation}. 
For practical implementation, we use bounded Lagrange multipliers $\sigma(\mu_i)$ and a Polyak update scheme to maintain the time-averaged state-action distributions $\{\overline{d}_{1}^{k},\dots,\overline{d}_{n}^{k}\}$.
Once the bounded Lagrange multipliers are fixed, we arrive at a standard RL problem, which is regularized by a KL-divergence term:
\begin{equation}\label{eq:fix-lmbda-problem}
\max_{d_{i}}\mathbb{E}_{d_{i}(s,a)}\left[R_{i}^{k}(s,a)\right]-\Dkl\left(d_{i}(S,A)||d_{O}(S,A)\right),
\end{equation}
where we abbreviate the iteration-$k$ reward as $R_i^{k} := R_i^{\mu_i^{k}}$ and define:
\begin{equation}\label{eq:vdw-dual-reward}
R_{i}^{k}(s,a):=
\underbrace{ (1-\sigma(\mu_{i}^{k})) }_{\text{Constraint Satisfaction}}\underbrace{\beta_{i}^{k}(s,a)}_{\text{VdW-Diversity}} + \underbrace{ \sigma(\mu_{i}^{k}) }_{\text{Constraint Violation}}\underbrace{\log\frac{c^{\star}(s)}{1-c^{\star}(s)}}_{\text{Expert-Imitation}}.
\end{equation}
Here, $c^{\star}(s)$ denotes a pretrained state-discriminator~\citep{kim2022demodice,ma2022smodice}.
This discriminator is critical for distinguishing states found in an expert dataset $\mathcal{D}_E\sim d_E(S)$ from those in on offline dataset $\mathcal{D}_O\sim d_O(S,A)$, and notably, it satisfies $c^{\star}(s)=d_E(s)/(d_E(s)+d_O(s))$.

\subsection{Offline Estimators using Fenchel Duality}

Given the KL-regularized objective in \cref{eq:fix-lmbda-problem}, the remaining challenge is to solve it \emph{offline} and recover the occupancy correction needed to estimate diversity, constraint violation, and policy updates from $d_O$. We use the DICE dual to obtain (i) a value-function objective trainable from offline transitions and (ii) normalized importance ratios that reweight samples from $d_O$ into the target occupancy.

The DICE framework, extensively explored in~\citep{nachum2020reinforcement,kim2022demodice,ma2022smodice,maYJB22goalbasedSmodice}, offers a principled approach to solving the offline KL-regularized RL problem outlined in \cref{eq:fix-lmbda-problem}.
In particular, framework~\citep{nachum2020reinforcement,kim2022demodice,ma2022smodice,maYJB22goalbasedSmodice} solves offline the KL-regularized RL problem in \cref{eq:fix-lmbda-problem} by considering its dual formulation, which reads
\begin{equation}\label{eq:Vstar}
V_{i}^{\star}=\argmin_{V(s)}(1-\gamma)\mathbb{E}_{s\sim\rho_{0}}\left[V(s)\right]+\log\mathbb{E}_{d_{O}(s,a)}\exp\left\{ R_{i}^{k}(s,a)+\gamma\mathcal{T}V(s,a)-V(s)\right\},
\end{equation}
where we denote by $\mathcal{T}V(s,a):=\mathbb{E}_{\mathcal{P}(s^{\prime}|s,a)}V(s^{\prime})$.
Minimizing \cref{eq:Vstar} yields $V_i^\star$ such that the induced TD residual defines the occupancy correction used to form normalized importance weights.
The temporal difference (TD) error is given by
\[
\delta_{i}(s,a)=R_{i}^{k}(s,a)+\gamma\mathcal{T}V_{i}^{\star}(s,a)-V_{i}^{\star}(s).
\]
Then, the primal solution of Problem~\ref{eq:iterative-procedure-Lagrange-relaxation} reads
\begin{equation}\label{eq:dopt}
\eta_{i}(s,a):=\frac{d_{i}^{\star}(s,a)}{d_{O}(s,a)}=\frac{\exp\{\delta_{i}(s,a)\}}{\mathbb{E}_{d_{O}(s^{\prime},a^{\prime})}\exp\{\delta{}_{i}(s^{\prime},a^{\prime})\}}=:\mathrm{softmax}_{d_{O}(s,a)}\left(\delta_{i}(s,a)\right).
\end{equation}
The ratio $\eta_i(s,a)=d_i^\star(s,a)/d_O(s,a)$ specifies which offline transitions must be upweighted to match the occupancy that optimizes the current non-stationary reward $R_i^k$.
Based on the importance ratios $\eta_i$ we can compute offline all necessary estimators.
In particular, for any function $f$, we can estimate offline the following expectation:
\begin{equation}\label{eq:offline-importance-sampling}
\E_{d_{i}(s,a)}[f(s,a)]=\E_{d_{O}(s,a)}[\eta_{i}(s,a)f(s,a)].
\end{equation}
Using \cref{eq:offline-importance-sampling} we can train offline an optimal policy by maximizing the following weighted behavior cloning objective $\E_{d_{O}(s,a)}[\eta_{i}(s,a)\log\pi_{i}(a\vert s)]$.
Similarly, we can estimate offline the successor features $\psi_i= \E_{d_O(s,a)}[\eta_i(s,a) \phi(s)]$ and also maintain the associated time-averaged successor representations $\psi_{i}^{k}=\mathbb{E}_{d_{O}(s,a)}[\overline{\eta}^{k}(s,a)\phi(s)]$, where $\overline{\eta}^{k}=\frac{1}{t}\sum_{t=1}^{k}\eta_{i}^{t}$.
Thus, the ratios $\eta_i$ support diversity estimation (via $\psi_i$), constraint monitoring, and policy learning, avoiding separate estimation modules.

This gives us the tool to estimate offline the VdW-Diversity term in \cref{eq:vdw-dual-reward}.

Next, we dynamically adjust the bounded Lagrange multipliers $\sigma(\mu_i)$ based on an offline estimation of the corresponding constraint violation.
In Corollary~\ref{cor:kl-estim}, we show that the LHS of \cref{eq:relaxed_imitation_constraint} admits an estimator
\begin{equation}
\mathbb{E}_{d_{O}(s,a)}\left[\eta_{i}(s,a)\left(\log\eta_{i}(s,a)-\log\frac{c^{\star}(s)}{1-c^{\star}(s)}\right)\right].
\end{equation}
In practice, however, we only have access to finitely many samples of the state occupancy $d_{O}(s,a)$.
Thus, in Lemma~\ref{lem:KL-estimator}, we derive the following finite sample estimator of the LHS of \cref{eq:relaxed_imitation_constraint}:
\[
\phi_i:=\log|\mathcal{D}_{O}| \,\, +\sum_{(s,a)\in\mathcal{D}_{O}}w_{i}(s,a)\left[\log w_{i}(s,a)-\log\frac{c^{\star}(s)}{1-c^{\star}(s)}\right],
\]
where
\[
w_{i}(s,a):=\mathrm{softmax}_{\mathcal{D}_{O}}(\delta_{i}(s,a))=\frac{\exp\{\delta_{i}(s,a)\}}{\sum_{(s^{\prime},a^{\prime})\in\mathcal{D}_{O}}\exp\{\delta_{i}(s^{\prime},a^{\prime})\}}.
\]
Furthermore, we can optimize the bounded Lagrange multipliers $\sigma(\mu_i)$ by minimizing the loss
$\mathcal{L}_{\mu}:=\sum_{i=1}^{n}\sigma(\mu_{i})(\varepsilon-\phi_{i})$.
Here we use gradient descent to adapt the multipliers $\mu_i$.

\subsection{Handling Non-Stationary Rewards}

To optimize Problem~\ref{eq:main-problem} offline, we extend the heuristic of \citet{zahavy2023discovering} into an alternating optimization scheme (cf.~\Cref{alg:dual-force}). Unlike standard DICE settings with fixed rewards, our reward $R_i^k$ changes each iteration due to both the diversity term $\beta_i^k$ and the bounded multipliers $\sigma(\mu_i^k)$. In offline training, repeatedly fitting a value function to a moving target, often with only a single gradient step per iteration, can destabilize the DICE optimization, which then corrupts the ratios and collapses the overall alternating procedure.

In this work, we address the preceding challenge by conditioning the value function (and policy) on a latent representation of a Functional Reward Encoding (FRE) \citep{frans24fre}, which is pre-trained on random linear functions, random two-layer neural networks, and simple human-engineered rewards.
Conditioning the value function $V_i(\cdot,z_r)$ and the policy $\pi_i(\cdot\mid\cdot,z_r)$ on the FRE latent turns the non-stationary reward sequence into a family of reward-indexed tasks, enabling parameter sharing without losing reward identifiability across iterations.
Further details on pre-training are given in \cref{app:FRE}.

In each iteration, given a fixed reward $r$ we compute its encoded FRE latent representation $z_r(S)$ over a subset of state-reward samples $L(r,S):=\{(s,r(s)) \,: \, s\in S\}$, where $S$ is subset of states sampled uniformly at random from $\mathrm{States}[\mathcal{D}_O]$.
Further, to reduce the variance, we sample uniformly at random several state subsets $\{S_1, \dots, S_m\}$ and take the mean $z_r$ over their FRE latent representations $z_r(S_i)$.
Storing each encountered latent (e.g., $z_i^k$) provides a lightweight key for recalling the corresponding skill $\pi_i(\cdot\mid\cdot,z_i^k)$ at evaluation time, yielding zero-shot recall of previously encountered rewards.

In line with standard deep learning practices, the value function and policy are parameterized with neural networks and consequently updated with a single gradient step over batches sampled uniformly at random.
Given a batch $\mathcal{B}$, the policy loss becomes
$\frac{|\mathcal{D}_O|}{|\mathcal{B}|}\sum_{(s,a)\in\mathcal{B}}w_{i}^{k+1}(s,a)\log\pi_{i}(a|s, z_i^k)$.

\section{Experiments}\label{sec:experiments}

\noindent\textbf{Evaluation questions.}
We evaluate: (\textbf{Q1}) whether learned skills are diverse in successor-feature space, (\textbf{Q2}) whether they satisfy state-only imitation constraints while achieving high expert-return, and (\textbf{Q3}) whether diversity yields robustness under adversarial obstacle variations.

\textbf{Data Collection.} 
To evaluate our method, we consider the 12 degree-of-freedom quadruped robot \envsolo~\citep{grimminger2020open} on two robotic tasks in simulation: locomotion and obstacle navigation.
For fair comparison and consistency, in terms of quality and diversity of learned skills, our experiments closely follow the setup in \citep{vlastelica2024doi} and use offline datasets whose collection process is described in Section G ``SOLO-12 Dataset Collection'' of their work.
In particular, following \citep{kim2022demodice,ma2022smodice,vlastelica2024doi}, a state discriminator is learned to differentiate between state demonstrations collected by an expert and from different behavioral policies.
To ensure that these behavioral policies provide sufficient diversity while fulfilling a specific task, an online algorithm is run for unsupervised skill discovery subject to value constraints, DOMiNiC~\citep{cheng2024dominic}, and Monte Carlo trajectories are collected from various policy checkpoints throughout the training process.
The expert dataset is then mixed into the offline dataset, and the information about which state-action comes from the expert remains hidden to our algorithm.
This replicates the experimental setup in \citep{vlastelica2024doi} and allows for a direct comparison between their algorithm (DOI) and ours (\textsc{Dual-Force}).

\textbf{Experimental Setup.} 
For each experiment, we train: a state-discriminator $c^{\star}$, a Functional Reward Encoding $\mathcal{F}$, a {\color{ourdarkblue} SMODICE-expert} (to visualize the target imitation behavior), and diverse skills that satisfy the imitation constraint.
To enable recall of all skills encountered during training, we store, for each intrinsic reward $R_i^k$, the corresponding mean FRE latent code $z_i^k$ and condition the learned policy on this code at evaluation.
For both robotic tasks (locomotion and obstacle navigation), the state-discriminator and the {\color{ourdarkblue} SMODICE-expert} are trained on the full state space (see \cref{app:solo12_state_space}), forcing the learned skills to imitate the state-only expert demonstrations in their entirety.
In contrast, for the diversity objective, we also follow \citep{cheng2024dominic, vlastelica2024doi} by using successor features induced by a projection onto the most relevant components of the state space, as this is beneficial in practice.

\textbf{Skills Evaluation.} 
It is important to note that our problem formulation does not assume access to a reward signal in the offline dataset.
However, if the offline dataset contains reward labels, then each skill learned during training can be evaluated off-policy using its corresponding importance ratios $\eta_i$.
In this work, we conduct an online evaluation of each learned skill by rolling out 30 Monte Carlo trajectories in simulation. 
We then compute the mean values of (i) the successor features and (ii) the cumulative return, relative to the reward signal (hidden to our algorithm) used to optimize the expert policy.
After training, we report the subset of learned codes $z_i^k$ whose corresponding policies $\pi(\cdot\mid\cdot,z_i^k)$ achieve at least $50\%$ of the expert’s return under the (withheld) evaluation reward used to train the expert.
While intermediate iterations may temporarily trade off imitation for diversity, the bounded-multiplier updates typically drive them back toward constraint satisfaction, yielding a large fraction of skills that imitate the expert and achieve high reward.

\textbf{Practical Implementation.}
\looseness=-1
For each state-action occupancy $d_i$ in Problem~\ref{eq:main-problem}, we train a value function and a policy, parameterized by neural networks.
We empirically observed that skill diversity increases and the training procedure stabilizes, when 
the neural network weights of the value functions (and similarly the policies) are independent across all state-action occupancies.
This is efficiently implemented by running the forward pass over all value functions (and policies) in parallel.
In the experiments below, we optimize over three state-action occupancies and assign them with the following color map: 
$d_1$ is {\color{ourorange} orange}, $d_2$ is {\color{ourbrown} brown}, and $d_3$ is {\color{ourred} red}.

\textbf{Main Result}. \textsc{Dual-Force} enjoys a strong diversity signal, satisfies the imitation constraint (see \cref{app:diversity_constraint}), and recalls all skills encountered during training, with the number of skills growing with training iterations. 
This is especially important under non-stationary diversity rewards, where fixed placeholder-based approaches forget earlier skills in favor of more recent ones.

\subsection{Locomotion Task}

\textbf{Data Collection.} 
The expert dataset is collected from a uni-modal expert trained to walk straight with constant linear velocity and {\color{cmiddle} middle} base-height.
The offline dataset contains non-expert behaviors achieving constant linear and angular velocity, and walking movements with different base-heights ({\color{clow} low}, {\color{cmiddle} middle}, {\color{ourorange} orange}).

\textbf{Diversity Objective.} 
The state space is projected onto the joint position components represented by a 12-dimensional vector, which captures the movements of each of the four legs: front left, front right, hind left, and hind right, and includes hip abduction/adduction, hip flexion/extension, and knee flexion/extension.
The joint positions serve as a proxy for body height, which is missing from the offline dataset.

\textbf{Results.}
These results answer \textbf{Q1} and partially \textbf{Q2}. 
\Cref{fig:locom-robustness-experiment} shows that \textsc{Dual-Force} recovers multiple distinct locomotion modes from the offline dataset, including all three base-height behaviors. 
At the same time, all learned skills maintain the expert's characteristic constant linear and angular velocity, indicating that diversity is achieved while preserving the state-only imitation target. 
Consistent with this, \Cref{fig:SFs}(a,b) shows that the learned skills form three well-separated clusters in successor-feature space, corresponding to the three base-height modes: within-cluster $\ell_2$ distances are small, while between-cluster distances are large. 
This indicates that the learned skills capture genuinely distinct behaviors rather than minor variations of the same gait. 
We visualize this structure using a 2D UMAP projection of the successor features.

\begin{figure}
    \newlength{\gridspace}
    \setlength{\gridspace}{1pt}
    \newcommand\scale{0.22}
    \setlength{\fboxrule}{1pt}
    \begin{tabular}{cccc}
    \raisebox{7ex}{\color{chigh}\fbox{\includegraphics[scale=0.09]{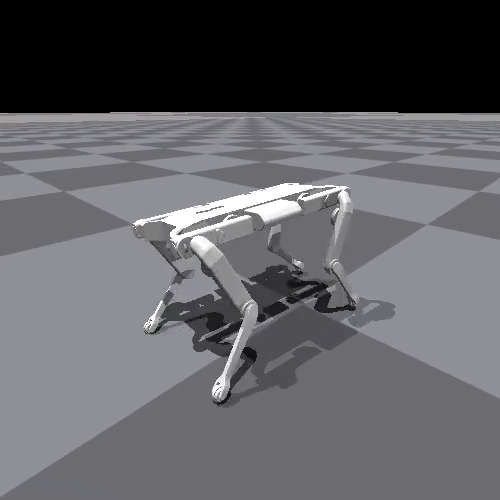}}\hspace{\gridspace}\color{cmiddle}\fbox{\includegraphics[scale=0.09]{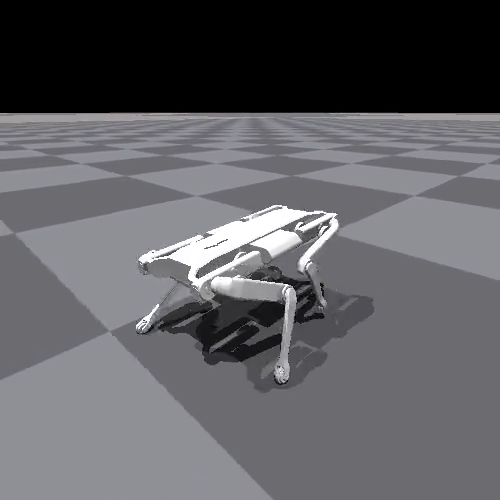}}} &
    
    \includegraphics[width=\scale\linewidth]{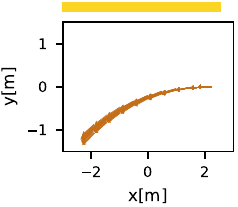} &
        
    \includegraphics[width=\scale\linewidth]{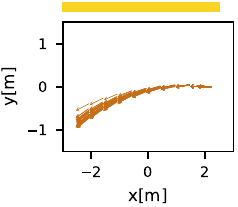} &
        
    \includegraphics[width=\scale\linewidth]{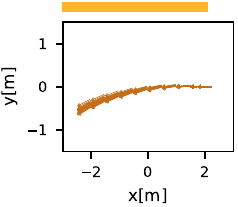} \\
        
      % 2
    \includegraphics[width=\scale\linewidth]{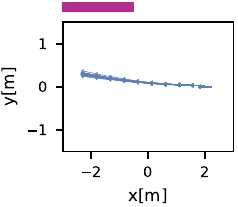} &
        
    \includegraphics[width=\scale\linewidth]{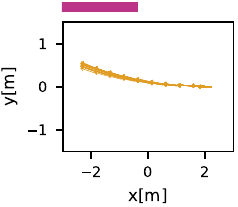}  &
        
    \includegraphics[width=\scale\linewidth]{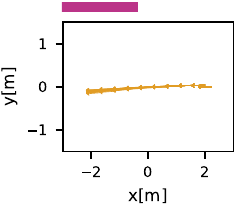} &
        
    \includegraphics[width=\scale\linewidth]{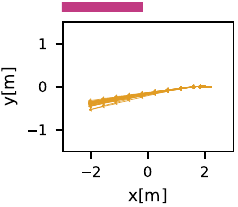} \\
        
    % 3
    \raisebox{7ex}{\color{clow}\fbox{\includegraphics[scale=0.09]{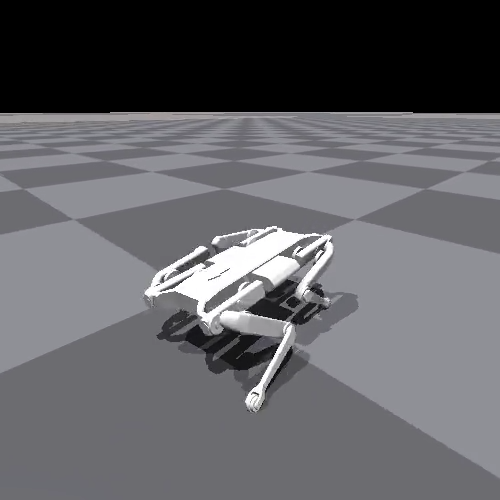}}\hspace{\gridspace}\color{cmiddle}\fbox{\includegraphics[scale=0.09]{figures/video/videoframe_zoom_diverse_0.png}}} &

    \includegraphics[width=\scale\linewidth]{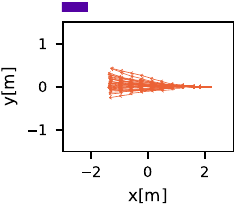} &
        
    \includegraphics[width=\scale\linewidth]{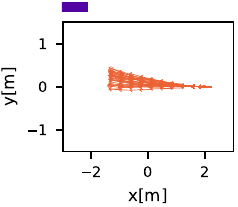} &
    
    \includegraphics[width=\scale\linewidth]{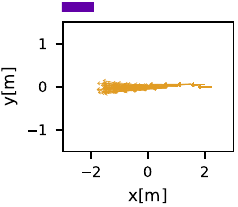} \\
    \end{tabular}
    \caption{A performance benchmark of skills learned in the locomotion task.
        The {\color{ourdarkblue} SMODICE-expert} walks with constant linear and angular velocity, and with {\color{cmiddle} middle} base-height.
        The learned skills recovers all base-height movements ({\color{clow} low}, {\color{cmiddle} middle}, {\color{chigh} high}) and exhibit different angular velocity.
        The colored horizontal bar at the top of each skill plot indicates the corresponding average base-height.}
    \label{fig:locom-robustness-experiment}
\end{figure}

\newlength{\imagewidth}
\setlength{\imagewidth}{0.2\linewidth}
\begin{figure}
    \newcommand\scale{0.2}
    \begin{tabular}{c c c c}
    \includegraphics[width=0.252\linewidth]{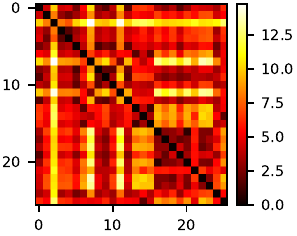} &
    \includegraphics[width=\scale\linewidth]{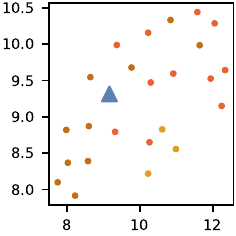} &
    \includegraphics[width=0.252\linewidth]{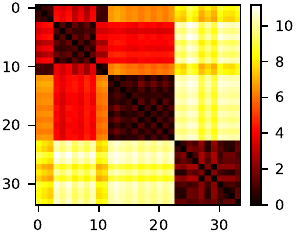} &
    \includegraphics[width=\scale\linewidth]{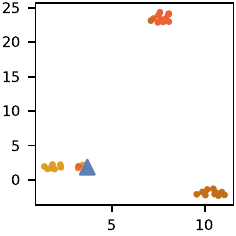} \\

    (a)\,\,\,\,\,\,\,\,\, & \,\,\,\,\,\,\,\,\,\,(b) & (c)\,\,\,\,\,\,\,\,\,\,\, & \,\,\,\,\,\,\,\,\,(d)
    \end{tabular}
    \caption{Tasks: (a,b) Locomotion and (c,d) Navigation.
        The triangle denotes the {\color{ourdarkblue} SMODICE-expert}, and the colored dots denote learned skills.
		(a,c) Successor features pair-wise $\ell_2$ distances between skills. The first row (column) is {\color{ourdarkblue} SMODICE-expert}, and all other rows (columns) are learned skills.
		(b,d) UMAP projection of successor features into 2D space. 
	}
	\label{fig:SFs}
\end{figure}

\newpage
\subsection{Obstacle Navigation Task}

%------------------------------------------
\begin{wrapfigure}{r}{3.2cm}
	\vspace{-38pt}
	\includegraphics[width=3.2cm]{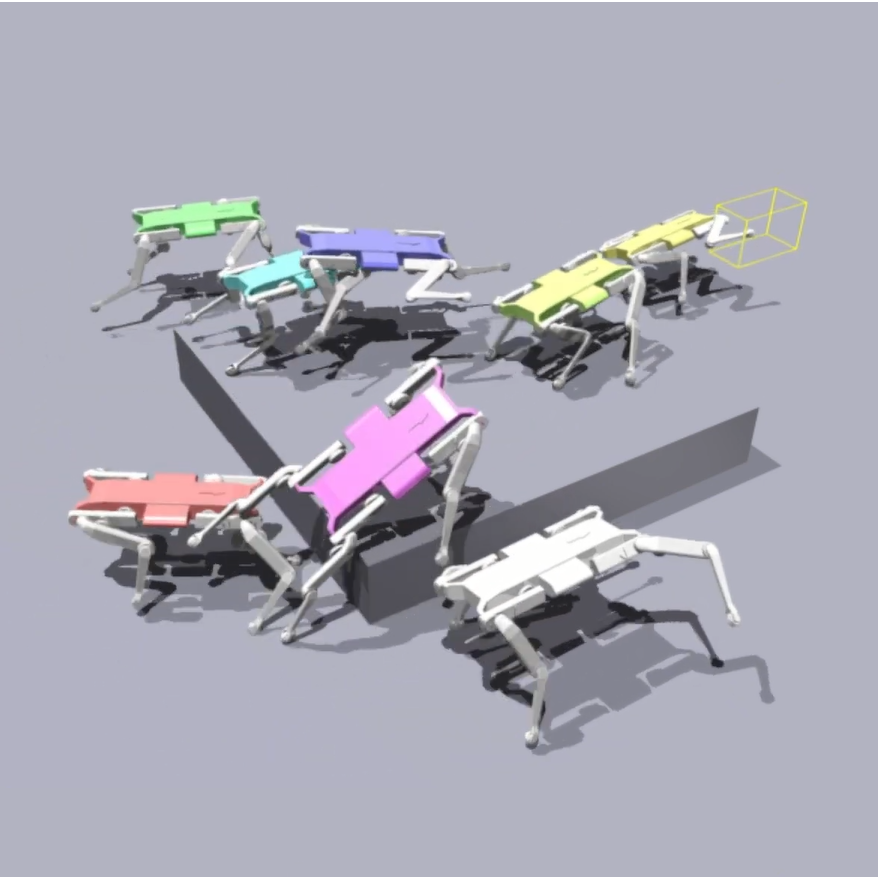}
\end{wrapfigure}
%------------------------------------------

\textbf{Data Collection.} 
The expert dataset is collected from a multi-modal expert that, when initialized in front of a box, is trained to reach a target position behind the box by either going over the box or surrounding it from the left or the right side.
The offline dataset contains various non-expert behaviors collected at different time checkpoints during the expert's training procedure.
It is important to note that these behaviors may not reach the target position, nor do they have to remain standing for the entire episode.

\textbf{Diversity Objective.} 
The state space is projected onto the ``base linear velocity'' and ``base angular velocity'' components, each represented by a 3-dimensional vector.
This choice encourages diversity, as most trajectories in the offline dataset have similar body heights on the ground, but approach the obstacle from different directions and with different velocities.

\textbf{Results.}
These results answer \textbf{Q1}, \textbf{Q2}, and \textbf{Q3}. 
\Cref{fig:SFs}(c,d) addresses \textbf{Q1} by showing that the successor features of the learned skills are well separated: pairwise $\ell_2$ distances are large across many skill pairs, and the 2D UMAP projection reveals distinct clusters rather than a single concentrated mode.
This indicates that \textsc{Dual-Force} discovers genuinely different navigation strategies in the offline dataset.
\Cref{fig:box-learned-skills} addresses \textbf{Q2} by showing that these diverse skills still solve the task and remain consistent with the expert state occupancy: the learned policies reach the target position behind the box while covering all expert behaviors as well as additional modalities present in the offline data. 
Finally, \Cref{fig:box-robustness-experiment} addresses \textbf{Q3} by demonstrating that this diversity translates into robustness under adversarial obstacle variations.
When extra fence obstacles block the left side, the right side, or both sides, the learned skill set contains alternative behaviors that still reach the goal, with several skills outperforming the {\color{ourdarkblue} SMODICE-expert} in the one-sided blocking cases and matching it when both sides are blocked.

\begin{figure}
    \newcommand\scale{0.21}
    \begin{tabular}{c c c c}
        % 1
        \includegraphics[width=\scale\linewidth]{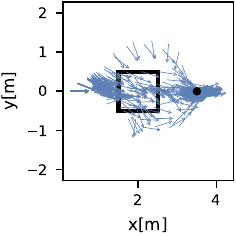} &
        
        \includegraphics[width=\scale\linewidth]{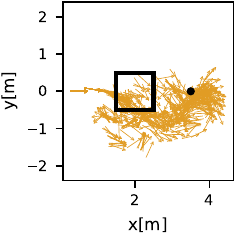} &
        
        \includegraphics[width=\scale\linewidth]{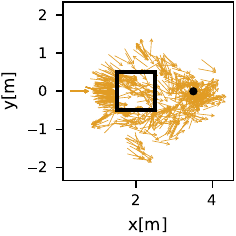} &
        
        \includegraphics[width=\scale\linewidth]{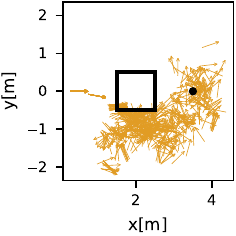} \\
        
        % 2
        \includegraphics[width=\scale\linewidth]{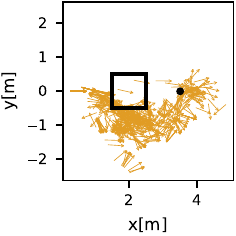} &

        \includegraphics[width=\scale\linewidth]{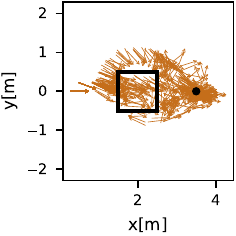} &

        \includegraphics[width=\scale\linewidth]{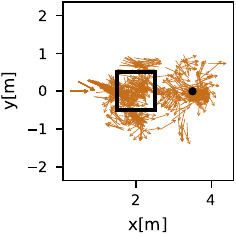} &
        
        \includegraphics[width=\scale\linewidth]{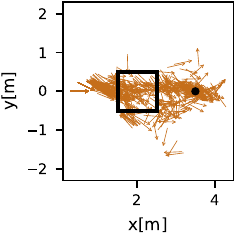} \\
        
        % 3
        \includegraphics[width=\scale\linewidth]{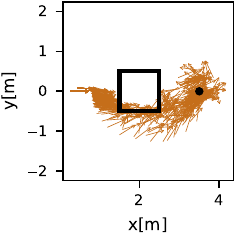} &

        \includegraphics[width=\scale\linewidth]{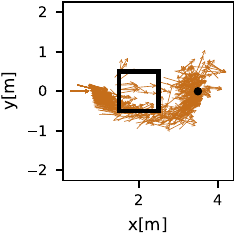} &
        
        \includegraphics[width=\scale\linewidth]{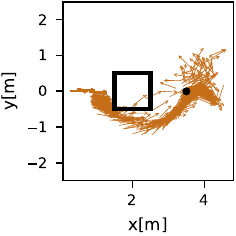} &
        
        \includegraphics[width=\scale\linewidth]{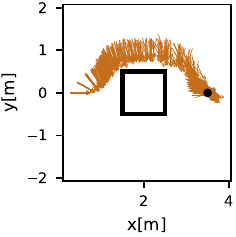} \\
        
        % 4
        \includegraphics[width=\scale\linewidth]{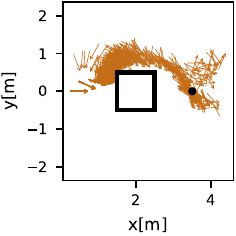} &

        \includegraphics[width=\scale\linewidth]{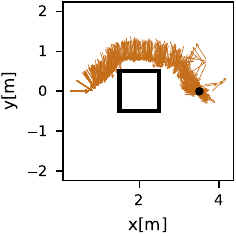} & 

        \includegraphics[width=\scale\linewidth]{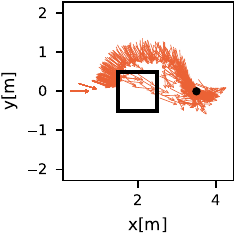} &

        \includegraphics[width=\scale\linewidth]{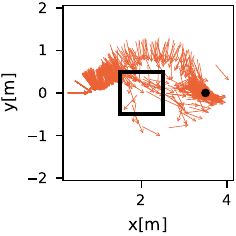} \\

        % 5
        \includegraphics[width=\scale\linewidth]{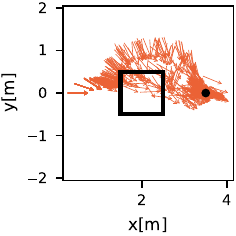} & 
        
        \includegraphics[width=\scale\linewidth]{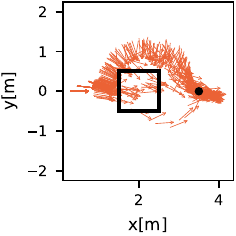} &
        
        \includegraphics[width=\scale\linewidth]{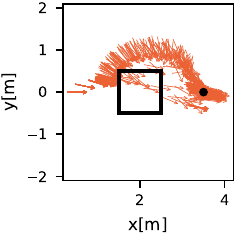} &
        
        \includegraphics[width=\scale\linewidth]{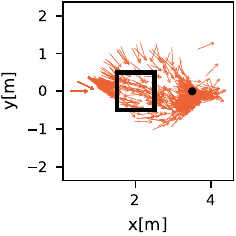} \\
        
        % 6
        \includegraphics[width=\scale\linewidth]{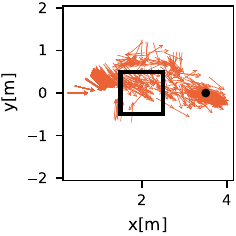} &
        
        \includegraphics[width=\scale\linewidth]{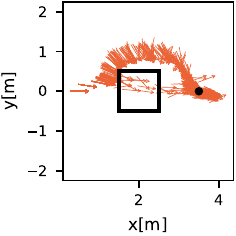} &
        
        \includegraphics[width=\scale\linewidth]{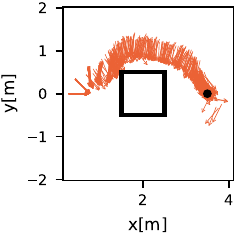} &
        
        \includegraphics[width=\scale\linewidth]{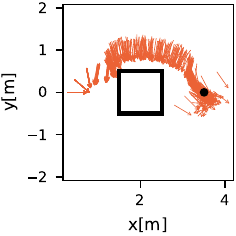} 
    \end{tabular}
	\caption{A performance benchmark of skills learned in the obstacle navigation task, where the {\color{ourdarkblue} SMODICE-expert} is initialized in front of a box of height 0.2m and reaches a target position behind the box.
    The learned skills exhibit diverse behaviors that cover various modalities of expert and offline datasets.}
	\label{fig:box-learned-skills}
\end{figure}

\begin{figure}
    \newcommand\scale{0.21}
    \begin{tabular}{cccc}
        % LEFT
        % larger box: y of size 1.6 starting from -0.6 to 1.0, box-height = 2.0
        % 1 
        \multicolumn{4}{c}{{\small (a) Left side is blocked}}\\
        \includegraphics[width=\scale\linewidth]{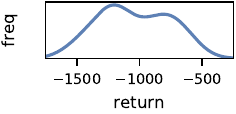} & 
    
        \includegraphics[width=\scale\linewidth]{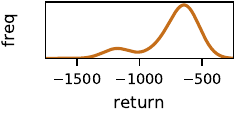} &
        
        \includegraphics[width=\scale\linewidth]{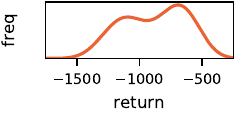} & 
    
        \includegraphics[width=\scale\linewidth]{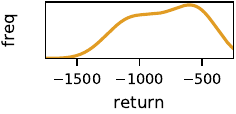} \\
        
        % 2 box 0.3
        \includegraphics[width=\scale\linewidth]{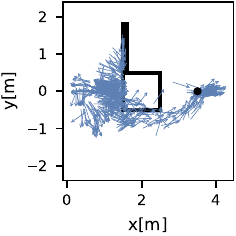} & 
        
        \includegraphics[width=\scale\linewidth]{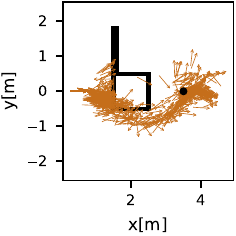} &
        
        \includegraphics[width=\scale\linewidth]{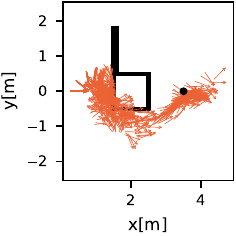} & 
        
        \includegraphics[width=\scale\linewidth]{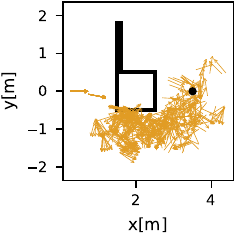} \\\\
        
        \multicolumn{4}{c}{{\small (b) Right side is blocked}}\\
        % RIGHT
        % larger box: y of size 1.6 starting from -0.6 to 1.0, box-height = 2.0
        % 1 
        \includegraphics[width=\scale\linewidth]{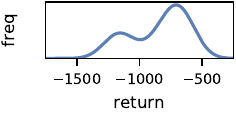} & 
        
        \includegraphics[width=\scale\linewidth]{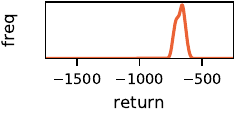} &
        
        \includegraphics[width=\scale\linewidth]{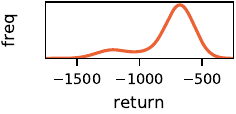} & 
        
        \includegraphics[width=\scale\linewidth]{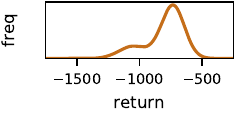} \\
        
        % 2 box 0.3
        \includegraphics[width=\scale\linewidth]{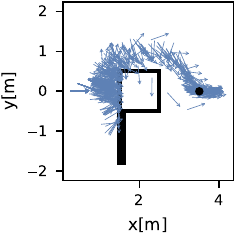} &
        
        \includegraphics[width=\scale\linewidth]{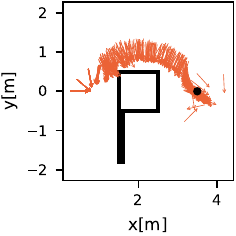} & 
        
        \includegraphics[width=\scale\linewidth]{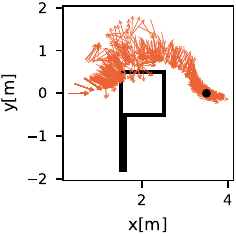} & 
        
        \includegraphics[width=\scale\linewidth]{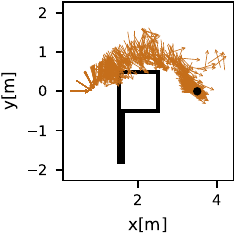} \\\\

        % BOTH
        % larger box: y of size 1.6 starting from -0.6 to 1.0, box-height = 2.0
        % 1 
        \multicolumn{4}{c}{{\small (c) Both left and right sides are blocked}}\\
        \includegraphics[width=\scale\linewidth]{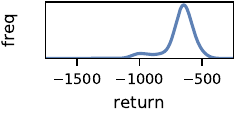} & 
        
        \includegraphics[width=\scale\linewidth]{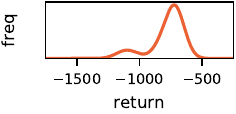} & 
        
        \includegraphics[width=\scale\linewidth]{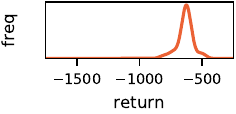} &
        
        \includegraphics[width=\scale\linewidth]{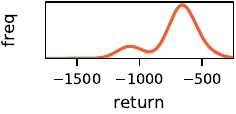} \\
        
        % 2 box 0.3
        \includegraphics[width=\scale\linewidth]{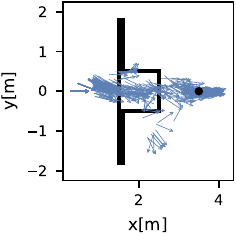} &
        
        \includegraphics[width=\scale\linewidth]{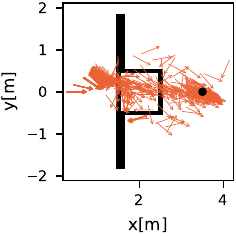} & 
        
        \includegraphics[width=\scale\linewidth]{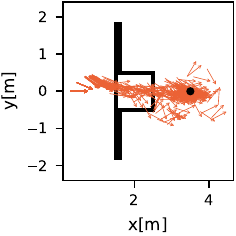} & 
        
        \includegraphics[width=\scale\linewidth]{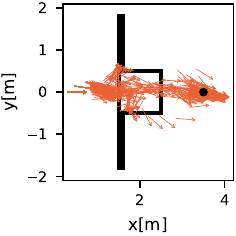} \\
        
    \end{tabular}
    \caption{A performance benchmark with additional fence obstacles, of height 0.6m, partially blocking the path from (a) the left side, (b) the right side, or (c) both the left and right sides.
		Among the diverse skills learned, there are several that outperform the {\color{ourdarkblue} SMODICE-expert} in (a,b) and perform on par with the {\color{ourdarkblue} SMODICE-expert} in (c).}
	\label{fig:box-robustness-experiment}
\end{figure}

\section{Related Work}\label{sec:rel-work}

Our work is situated within the broader landscape of skill discovery in RL, but it specifically addresses limitations prevalent in existing methods when applied to offline settings and the challenge of learning a large set of diverse skills. 
While significant progress has been made in unsupervised skill discovery algorithms in unconstrained and online environments \citep{eysenbach19diayn, campos2020edl, achiam2018valor, strouse2021disdain}, these approaches generally struggle to learn a substantial number of skills \citep{campos2020edl, achiam2018valor}. 
For instance, methods relying on skill predictability \citep{sharma2019dads} or ensemble-based information gain formulations \citep{strouse2021disdain} are inherently online, demanding extensive and often impractical interaction with the environment. 
Even offline methods based on skill predictability \citep{LiuZHW23} typically require expert demonstrations with labeled actions or merely extract skills already present in the dataset, failing to generate novel behaviors. 
This contrasts sharply with our approach, which overcomes these limitations by offering an offline algorithm that does not require a predefined number of skills or access to labeled expert demonstrations.

The challenge of unsupervised reinforcement learning, where agents learn meaningful representations and skills without explicit rewards, is central to our investigation. Seminal work on successor features \citep{dayan1993successor, barreto2017successor} demonstrated how decoupling environmental dynamics from rewards can facilitate transfer across tasks. 
While these representations have been combined with intrinsic motivation to enhance diversity \citep{GregorRW17vic, barreto2017successor, hansen20visr}, they often do not provide the explicit diversity signal we seek in our physically inspired VDW force objective. 
Our approach uses successor features as a fundamental building block for computing this diversity objective, but critically, we extend their utility to an off-policy estimation context, a capability not addressed by these prior works. 
This allows \textsc{Dual-Force} to efficiently estimate the VdW force components using data collected from any behavior policy.

Our work also aligns with the principles of Quality-Diversity (QD), a field focused on discovering diverse sets of optimal or near-optimal policies to improve the robustness and generalization of RL agents. 
Traditional QD algorithms, such as MAP-Elites \citep{CullyCTM15, MouretC15} and novelty search with local competition \citep{LehmanS11}, primarily employ evolutionary strategies to maintain policy collections, balancing quality and diversity \citep{Cully19, PughSS16, TaraporeCCM16}. 
While effective, these methods are typically online and computationally intensive, often requiring extensive simulation or real-world interaction. 
In contrast, our algorithm offers an off-policy solution to the QD problem, allowing for the discovery of diverse policies from offline datasets, thereby circumventing the high sample complexity that often limits the application of evolutionary QD methods in practical robotic scenarios.

Furthermore, \textsc{Dual-Force} advances constrained skill discovery in the offline setting.
Prior work considered value-constrained diversity maximization online, first with a single constraint \citep{zahavy2023discovering} and later with multiple constraints \citep{cheng2024dominic}, using the Van der Waals (VdW) force as the diversity signal. 
In parallel, \citet{vlastelica2024doi} studied offline diversity maximization under imitation constraints using a mutual-information objective. 
\textsc{Dual-Force} combines the strengths of these lines of work by bringing VdW-based diversity maximization into the offline imitation-constrained regime, thereby avoiding the skill discriminator required by mutual-information-based approaches.

Our method also builds directly on the DIstribution Correction Estimation (DICE) framework for off-policy learning \citep{nachum2020reinforcement}. 
DICE has been applied to off-policy policy gradients \citep{nachum2019algaedice}, offline imitation learning \citep{kim2022demodice, ma2022smodice}, and off-policy evaluation \citep{dai2020coindice}, as well as recent KL-regularized offline RL variants \citep{lee2021optidice, lee2022coptidice}. 
Our key contribution in this context is to use DICE not only for policy optimization, but also to estimate the quantities required by the VdW diversity objective offline, including successor features and dual-conjugate variables. 
This yields a single off-policy estimation pipeline for diversity optimization, constraint monitoring, and policy learning.

\noindent\textbf{Positioning.}
\textsc{Dual-Force} lies at the intersection of two lines of work: offline imitation-constrained skill discovery and online VdW-based constrained diversity maximization. 
Relative to the former, it replaces discriminator-based mutual-information objectives with a discriminator-free VdW objective that crucially admits off-policy estimation. 
Relative to the latter, it targets the offline regime and addresses the resulting non-stationary reward optimization via FRE-conditioned value and policy learning, while enabling latent-code-based skill recall.

\section{Conclusion}\label{sec:conc}

In this work, we introduced \textsc{Dual-Force}, an offline method for diversity maximization under state-only imitation constraints. 
By combining a discriminator-free off-policy estimator of the Van der Waals (VdW) diversity objective with FRE-conditioned value and policy learning, the method remains stable under the non-stationary rewards induced by alternating Lagrangian optimization and enables zero-shot recall of all skills encountered during training through stored latent codes.
Our results on \envsolo locomotion and obstacle navigation show that offline demonstration data can be transformed into a diverse set of robust behaviors without additional environment interaction. 
These findings suggest that diversity-driven offline imitation learning can be a practical route toward more adaptable robotic policies, especially in settings where online data collection is costly or unsafe.

\section{Acknowledgments}

We acknowledge the support from the German Federal Ministry of Education and Research (BMBF) through the Tübingen AI Center (FKZ: 01IS18039B). 
Georg Martius is a member of the Machine Learning Cluster of Excellence, funded by the Deutsche Forschungsgemeinschaft (DFG, German Research Foundation) under Germany’s Excellence Strategy – EXC number 2064/1 – Project number 390727645. 
This work was supported by the ERC - 101045454 REAL-RL. 
Pavel Kolev was supported by the Cyber Valley Research Fund and the Volkswagen Stiftung (No 98 571).

\newpage

\bibliography{references}
\bibliographystyle{abbrvnat}

\newpage
\appendix

\part*{Appendix}
\addcontentsline{toc}{section}{Appendix}
\setcounter{section}{0}
\renewcommand{\thesection}{\Alph{section}}

% Figures, Tables and Equations will have S in the name
\renewcommand{\thetable}{S\arabic{table}}
\renewcommand{\thefigure}{S\arabic{figure}}
\renewcommand{\theequation}{S\arabic{equation}}
\setcounter{table}{0}
\setcounter{figure}{0}
\setcounter{equation}{0}

\section{Pseudocode of \textsc{Dual-Force}}\label{app:pseudo-code}

We summarize \textsc{Dual-Force} in \Cref{alg:dual-force}.
Our method combines five key ingredients: (i) a state-discriminator-based imitation objective for matching the expert state occupancy, 
(ii) a Van der Waals (VdW) force objective that provides a strong signal for learning diverse skills even in the offline setting,
(iii) DICE-based off-policy estimation of the quantities required to optimize the VdW objective from offline data, 
(iv) bounded Lagrange multipliers that balance diversity and imitation constraints, and 
(v) Functional Reward Encoding (FRE), which stabilizes learning under non-stationary rewards.
In addition, FRE provides a compact latent for each skill encountered during training, with the number of skills growing over iterations.

\begin{algorithm}[h]
	{\small
		\caption{\textsc{Dual-Force}}
		\label{alg:dual-force}
		\textbf{Input:} state-only expert dataset $\mathcal{D}_{E}\sim d_{E}(S)$; offline dataset $\mathcal{D}_O\sim d_O(S,A)$ such that $\mathcal{D}_{E}\subset\mathrm{States}[\mathcal{D}_{O}]$; $n$ number of VdW's state-action occupancies; $m$ number of subsets of states; $t$ number of state-reward pairs; Polyak scale $\alpha>0$\\
        \textbf{Output:} stored reward latents $\{z_i^k\}$ and the corresponding latent-conditioned policies $\{\pi_i(\cdot\mid\cdot,z_i^k)\}$ (recallable skills).\\
		\textbf{Initialize:} Sample $w_i^0$ uniformly at random from the probability simplex $\triangle^{|\mathcal{D}_O|}$, for all $i\in\{1,\dots,n\}$\\[-1ex]
		
        \textbf{Pre-train:} a state-discriminator $c^{\star}:\mathcal{S}\rightarrow(0,1)$ via optimizing the following objective with the gradient penalty in~\citep{gulrajani2017improved}
		$\max_{c}\mathbb{E}_{d_{E}(s)}[\log c(s)]+\mathbb{E}_{d_{O}(s)}[\log(1-c(s))]$\\
		\textbf{Pre-train:} a Functional Reward Encoding (FRE) $\mathcal{F} : (\mathcal{S}\times \mathcal{R})^m \mapsto \mathcal{Z}$ on state subsets of $\mathrm{States}[\mathcal{D}_{O}]$ and general unsupervised reward functions as described in \cref{app:FRE}\\[-1ex]
		
		\textbf{Repeat until convergence:}
		
		$\quad$\textbf{(Van der Waals Force)}
		
		$\quad$\textbf{For} each index $i\in\{1,\dots,n\}$:
		
		$\qquad$compute successor features $\psi_{i}^{k}:=\sum_{(s,a)\in\mathcal{D}_{O}}w_{i}^{k}(s,a)\phi(s)$
		
		$\qquad$compute closest distance $\ell_{i}^{k}:=\Vert\psi_{i}^{k}-\psi_{j_{i}^{\star}}^{k}\Vert_{2}$ where $j_{i}^{\star}:=\argmin_{j\neq i}\Vert\psi_{i}^{k}-\psi_{j}^{k}\Vert_{2}$
		
		$\qquad$compute VdW reward $\beta_{i}^{k}(s,a):=(1-(\ell_{i}^{k}/\ell_{0})^{3})\langle\phi(s),\psi_{i}^{k}-\psi_{j_{i}^{\star}}^{k}\rangle$
		
		$\qquad$compute reward $R_i^k(s,a):=(1-\sigma(\mu_{i}^{k}))\beta_{i}^{k}(s,a)+\sigma(\mu_{i}^{k})\log\frac{c^{\star}(s)}{1-c^{\star}(s)}$
		
		$\qquad$compute the mean $z_i^k$ over FREs $\{z_i^k(S_j)=\mathcal{F}(L(R_i^k,S_j))\}_{j=1}^{m}$, where $S_j \sim \mathrm{States}[\mathcal{D}_O]$ with $|S_j|=t$\\[-1ex]

		$\quad$\textbf{(Value Function and Policy)}
		
		$\quad$\textbf{For} each index $i\in\{1,\dots,n\}$:
		
		$\quad$$\quad$ update with GD the FRE-cond. value function $V_{i}(\cdot, z_i^k)$ optimizing \cref{eq:Vstar} with the reward $R_{i}^{k}$
		
		$\quad$$\quad$ compute ratios $w_i(s,a):=\mathrm{softmax}_{\mathcal{D}_O}\left(R_{i}^{k}(s,a)+\gamma\mathcal{T}V_{i}(s, a, z_i^k)-V_{i}(s, z_i^k)\right)$ for all $s,a\in\mathcal{D}_O$
		
		$\quad$$\quad$ compute Polyak average $w_{i}^{k+1}:=(1-\alpha)w_{i}^{k}+\alpha w_{i}$

		$\quad$$\quad$ update with GD the FRE-cond. policy $\pi_{i}(\cdot|\cdot, z_i^k)$ minimizing $\sum_{(s,a)\in\mathcal{D}_{O}}w_{i}^{k+1}(s,a)\log\pi_{i}(a|s, z_i^k)$\\[-1ex]
		
		$\quad$\textbf{(Bounded Lagrange Multipliers)}
		
		$\quad$\textbf{For} each index $i\in\{1,\dots,n\}$:
		
		$\qquad$compute an estimator 
		$\phi_{i}:=\log|\mathcal{D}_{O}|+\sum_{(s,a)\in\mathcal{D}_{O}}w_{i}^{k+1}(s,a)\Big[\log w_{i}^{k+1}(s,a)-\log\frac{c^{\star}(s)}{1-c^{\star}(s)}\Big]$
		
		$\quad$Update with GD $\mu^{k+1}$ minimizing the loss $\sum_{i=1}^{n}\sigma(\mu_{i}^{k})(\varepsilon-\phi_{i})$
	}
\end{algorithm}

\section{Reproducibility}

For the implementation of \textsc{Dual-Force} we used the PyTorch Autograd framework. 
The offline datasets were collected by \citet{vlastelica2024doi} and we used Isaac Gym to evaluate the learned skills.
The training was performed on an NVIDIA GeForce RTX 2080 Ti graphics card, and computations took in real-time:
\vspace{-8pt}
\begin{itemize}
    \setlength{\itemsep}{2pt} % Change to desired spacing
    \setlength{\parskip}{2pt} % Change to desired spacing
    \item FRE transformer (see \cref{app:FRE}): Locom (10h, batch 2048), Obstacle-Navi (10h, batch 1280)
    \item Skill-discriminator + {\color{ourdarkblue} SMODICE-expert}: Locom (0.3h, batch 8192), Obstacle-Navi (1h, batch 8192)
    \item \textsc{Dual-Force}: Locom (0.5h, batch 8192), Obstacle-Navi (1h, batch 2560)
\end{itemize}
\vspace{-8pt}
The \envsolo robot is developed as part of the Open Dynamic Robot Initiative~\citet{grimminger2020open}.

\newpage

\section{Imitation Constraint Relaxation}\label{app:sec:imitation-constraint-relax}

Our analysis makes use of the following assumption.

\begin{asm}[Expert coverage]\label{asm:base}
	We assume that $d_E(s)>0$ implies $d_O(s)>0$.
\end{asm}

\begin{lem}[State-only KL Estimator]\label{lem:state-KL-est}
	Under \Cref{asm:base}, we have
	\begin{equation}
	D_{\mathrm{KL}}\left(d_{i}(S)||d_{E}(S)\right)\leq-\mathbb{E}_{d_{i}(s)}\left[\log\frac{d_{E}(s)}{d_{O}(s)}\right]+D_{\mathrm{KL}}\left(d_{i}(S,A)||d_{O}(S,A)\right)\label{eq:kl-cnstr}
	\end{equation}
\end{lem}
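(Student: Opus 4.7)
The plan is to rewrite the left-hand side KL divergence by inserting the offline state occupancy $d_O(s)$ inside the logarithm, and then to bound the resulting state-level KL against the joint state-action KL via the chain rule of KL divergence. \Cref{asm:base} guarantees absolute continuity wherever needed, so all the logarithms involving $d_E/d_O$ and $d_i/d_O$ are well defined on the support of $d_i$.

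First I would perform the algebraic identity
\begin{equation*}
D_{\mathrm{KL}}(d_i(S)\,\|\,d_E(S)) = \mathbb{E}_{d_i(s)}\!\left[\log\frac{d_i(s)}{d_E(s)}\right] = \mathbb{E}_{d_i(s)}\!\left[\log\frac{d_i(s)}{d_O(s)}\right] - \mathbb{E}_{d_i(s)}\!\left[\log\frac{d_E(s)}{d_O(s)}\right],
\end{equation*}
which rewrites the target quantity as $D_{\mathrm{KL}}(d_i(S)\,\|\,d_O(S))$ minus the importance-weighted expert log-ratio that appears on the right-hand side of \cref{eq:kl-cnstr}.

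Second, I would apply the standard chain-rule/data-processing property for KL divergence on the joint state-action distributions,
\begin{equation*}
D_{\mathrm{KL}}(d_i(S,A)\,\|\,d_O(S,A)) = D_{\mathrm{KL}}(d_i(S)\,\|\,d_O(S)) + \mathbb{E}_{d_i(s)}\bigl[D_{\mathrm{KL}}(d_i(A\mid S)\,\|\,d_O(A\mid S))\bigr],
\end{equation*}
and use non-negativity of the conditional KL to conclude $D_{\mathrm{KL}}(d_i(S)\,\|\,d_O(S)) \leq D_{\mathrm{KL}}(d_i(S,A)\,\|\,d_O(S,A))$. Substituting this upper bound into the identity from the first step yields \cref{eq:kl-cnstr}.

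There is no real obstacle here; the only mildly delicate point is ensuring the importance log-ratio $\log(d_E/d_O)$ is well defined under $d_i$, which follows from \Cref{asm:base} together with the implicit absolute continuity $d_i \ll d_O$ used in the DICE framework (otherwise the joint KL on the right-hand side is already $+\infty$ and the bound is trivial). The argument is thus a two-line combination of a log-identity and the chain rule for KL.
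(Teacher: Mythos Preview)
Your proposal is correct and follows essentially the same route as the paper: the paper combines its Claim~\ref{clm:two} (your first step, the log-identity inserting $d_O(s)$) with Claim~\ref{clm:one} (your second step, the KL chain rule plus non-negativity of the conditional term) to obtain the inequality. Your additional remarks on absolute continuity are a nice clarification the paper leaves implicit.
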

\begin{proof}
	The statement follows by combining Claim~\ref{clm:one} and \ref{clm:two}.
\end{proof}

\begin{cor}[Structural]\label{cor:kl-estim}
	Under \Cref{asm:base}, the RHS of \cref{eq:kl-cnstr} is estimated by
	\[
	\mathbb{E}_{d_{O}(s,a)}\left[\eta_{i}(s,a)\left(\log\eta_{i}(s,a)-\log\frac{c^{\star}(s)}{1-c^{\star}(s)}\right)\right].
	\]
\end{cor}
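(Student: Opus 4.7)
The plan is to rewrite each of the two summands on the right-hand side of \eqref{eq:kl-cnstr} as an expectation under $d_O(s,a)$, weighted by the ratio $\eta_i(s,a)=d_i(s,a)/d_O(s,a)$, and then identify the factor $\log(d_E(s)/d_O(s))$ with $\log\frac{c^\star(s)}{1-c^\star(s)}$ using the closed form of the optimal state-discriminator.

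First I would recall the identity $c^\star(s)=d_E(s)/(d_E(s)+d_O(s))$, from which a one-line algebraic manipulation yields $\log\frac{c^\star(s)}{1-c^\star(s)}=\log\frac{d_E(s)}{d_O(s)}$. This is the key substitution that links the expert-versus-offline log-ratio to a quantity we can evaluate from samples of $\mathcal{D}_O\cup\mathcal{D}_E$ via the pretrained discriminator.

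Next I would perform the importance-sampling rewrite term by term. For the first summand, since $\log(d_E(s)/d_O(s))$ depends only on $s$, I would lift the expectation from $d_i(s)$ to $d_i(s,a)$ (marginalization is trivial), and then apply the standard change of measure
\[
\mathbb{E}_{d_i(s,a)}[f(s,a)]=\mathbb{E}_{d_O(s,a)}[\eta_i(s,a)f(s,a)],
\]
valid under \Cref{asm:base} (which guarantees absolute continuity on the support needed), to obtain
\[
-\mathbb{E}_{d_i(s)}\!\left[\log\frac{d_E(s)}{d_O(s)}\right]=-\mathbb{E}_{d_O(s,a)}\!\left[\eta_i(s,a)\,\log\frac{c^\star(s)}{1-c^\star(s)}\right].
\]
For the second summand, I would expand the KL divergence as $\mathbb{E}_{d_i(s,a)}[\log\eta_i(s,a)]$ and apply the same importance-sampling identity to get $\mathbb{E}_{d_O(s,a)}[\eta_i(s,a)\log\eta_i(s,a)]$. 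Summing the two rewrites and factoring $\eta_i(s,a)$ yields exactly the claimed estimator.

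The proof is essentially a bookkeeping calculation; there is no real obstacle. The only point that deserves a sentence of justification is the applicability of the change of measure, which relies on \Cref{asm:base} ensuring that $d_O$ dominates $d_E$ (and, implicitly via the DICE setup, that $d_i$ is absolutely continuous with respect to $d_O$ so that $\eta_i$ is well-defined). Everything else is algebraic identification.
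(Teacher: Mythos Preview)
Your proposal is correct and follows essentially the same approach as the paper: the paper's proof simply invokes Claim~\ref{clm:three} (which lifts $\mathbb{E}_{d_i(s)}[\log(d_E/d_O)]$ to $d_i(s,a)$, applies the importance-sampling change of measure via $\eta_i$, and identifies the log-ratio with $\log\tfrac{c^\star}{1-c^\star}$), together with the obvious rewrite of the KL term as $\mathbb{E}_{d_O}[\eta_i\log\eta_i]$. If anything, your write-up is slightly more explicit than the paper's, since you spell out the handling of the KL summand separately rather than leaving it implicit.
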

\begin{proof}
	The statement follows by combining Lemma~\ref{lem:state-KL-est} and Claim~\ref{clm:three}.
\end{proof}

\subsection{Useful Facts}

\begin{clm}\label{clm:one} 
	It holds that
	\[
	D_{\mathrm{KL}}\left(d_{\pi_{1}}(S,A)||d_{\pi_{2}}(S,A)\right)=D_{\mathrm{KL}}\left(d_{\pi_{1}}(S)||d_{\pi_{2}}(S)\right)+\mathbb{E}_{d_{\pi_{1}}(s)}D_{\mathrm{KL}}\left(\pi_{1}(\cdot|s)||\pi_{2}(\cdot|s)\right)
	\]
\end{clm}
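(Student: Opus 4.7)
The statement is the standard chain rule for KL divergence applied to the state--action occupancy. The plan is to use the factorization $d_\pi(s,a) = d_\pi(s)\,\pi(a|s)$ (which holds by definition of the state--action occupancy together with its marginal $d_\pi(s)$), and then split the logarithm of the ratio additively.

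Concretely, I would start from the definition
\[
D_{\mathrm{KL}}(d_{\pi_1}(S,A)\,||\,d_{\pi_2}(S,A)) = \sum_{s,a} d_{\pi_1}(s,a)\,\log\frac{d_{\pi_1}(s,a)}{d_{\pi_2}(s,a)},
\]
substitute $d_{\pi_i}(s,a) = d_{\pi_i}(s)\,\pi_i(a|s)$ on both sides of the log ratio, and then use $\log\tfrac{xy}{x'y'} = \log\tfrac{x}{x'} + \log\tfrac{y}{y'}$ to decompose the expression into two sums. The first sum has an integrand depending only on $s$, so summing $\pi_1(a|s)$ over $a$ gives $1$ and leaves $\sum_s d_{\pi_1}(s)\log\tfrac{d_{\pi_1}(s)}{d_{\pi_2}(s)} = D_{\mathrm{KL}}(d_{\pi_1}(S)\,||\,d_{\pi_2}(S))$. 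The second sum factors as $\sum_s d_{\pi_1}(s)\sum_a \pi_1(a|s)\log\tfrac{\pi_1(a|s)}{\pi_2(a|s)} = \mathbb{E}_{d_{\pi_1}(s)} D_{\mathrm{KL}}(\pi_1(\cdot|s)\,||\,\pi_2(\cdot|s))$, which is exactly the claimed second term.

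There is no real obstacle here; the only thing to be careful about is the absolute continuity assumption that makes all the logarithms well-defined (i.e.\ $d_{\pi_2}(s,a)=0 \Rightarrow d_{\pi_1}(s,a)=0$, which is implicit whenever the LHS is finite and which, restricted to the state marginal, is exactly \Cref{asm:base} when the claim is later invoked with $\pi_2$ being the behavioral policy). Writing out the factorization--and--split carefully, and noting that exchanging the order of summation is permitted since all terms are non-negative after grouping, yields the result in two or three lines.
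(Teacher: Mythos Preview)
Your proposal is correct and follows essentially the same approach as the paper: factor $d_{\pi_i}(s,a)=d_{\pi_i}(s)\pi_i(a|s)$, split the log of the ratio, and marginalize out $a$ in the first term. The paper's proof is exactly this three-line computation without the extra remark on absolute continuity.
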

\begin{proof}
	We have
	\begin{eqnarray*}
		D_{\mathrm{KL}}\left(d_{\pi_{1}}(S,A)||d_{\pi_{2}}(S,A)\right) & = & \mathbb{E}_{d_{\pi_{1}}(s,a)}\left[\log\frac{d_{\pi_{1}}(s,a)}{d_{\pi_{2}}(s,a)}\right]=\mathbb{E}_{d_{\pi_{1}}(s,a)}\left[\log\frac{d_{\pi_{1}}(s)\pi_{1}(a|s)}{d_{\pi_{2}}(s)\pi_{2}(a|s)}\right]\\
		& = & \mathbb{E}_{d_{\pi_{1}}(s,a)}\left[\log\frac{d_{\pi_{1}}(s)}{d_{\pi_{2}}(s)}\right]+\mathbb{E}_{d_{\pi_{1}}(s)}\mathbb{E}_{\pi_{1}(a|s)}\left[\log\frac{\pi_{1}(a|s)}{\pi_{2}(a|s)}\right]\\
		& = & D_{\mathrm{KL}}\left(d_{\pi_{1}}(S)||d_{\pi_{2}}(S)\right)+\mathbb{E}_{d_{\pi_{1}}(s)}D_{\mathrm{KL}}\left(\pi_{1}(\cdot|s)||\pi_{2}(\cdot|s)\right)
	\end{eqnarray*}
\end{proof}

\begin{clm}\label{clm:two} 
	Under \Cref{asm:base}, we have
	\[
	D_{\mathrm{KL}}\left(d_{i}(S)||d_{E}(S)\right)=-\mathbb{E}_{d_{i}(s)}\left[\log\frac{d_{E}(s)}{d_{O}(s)}\right]+D_{\mathrm{KL}}\left(d_{i}(S)||d_{O}(S)\right)
	\]
\end{clm}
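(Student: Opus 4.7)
The plan is to prove Claim~\ref{clm:two} by a direct expansion of the KL divergence combined with a multiplicative identity inside the logarithm. Concretely, I will start from the definition
\[
D_{\mathrm{KL}}\bigl(d_{i}(S)\,\|\,d_{E}(S)\bigr)=\mathbb{E}_{d_{i}(s)}\left[\log\frac{d_{i}(s)}{d_{E}(s)}\right],
\]
then rewrite the ratio inside the log as
\[
\frac{d_{i}(s)}{d_{E}(s)}=\frac{d_{i}(s)}{d_{O}(s)}\cdot\frac{d_{O}(s)}{d_{E}(s)},
\]
and apply linearity of expectation to split the resulting sum. The first term is exactly $D_{\mathrm{KL}}\bigl(d_{i}(S)\,\|\,d_{O}(S)\bigr)$ and the second term is $-\mathbb{E}_{d_{i}(s)}\bigl[\log(d_{E}(s)/d_{O}(s))\bigr]$, which matches the claimed identity.

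The role of \Cref{asm:base} is to ensure the factorization is well defined on the support of $d_{i}$. Since finiteness of the original $D_{\mathrm{KL}}\bigl(d_{i}(S)\,\|\,d_{E}(S)\bigr)$ forces $d_{i}(s)>0 \Rightarrow d_{E}(s)>0$, and the expert coverage assumption forces $d_{E}(s)>0 \Rightarrow d_{O}(s)>0$, the ratio $d_{O}(s)/d_{E}(s)$ is strictly positive and finite wherever $d_{i}(s)>0$. Consequently, the split expectation makes sense pointwise on the support of $d_{i}$ and no measure-zero corrections are required.

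There is no substantive obstacle here: the proof is a two-line algebraic identity, with the assumption merely securing that no indeterminate $\log 0$ or $0/0$ arises. The only thing worth highlighting is that this identity is what allows \Cref{cor:kl-estim} to convert a state-only KL penalty (involving the unknown $d_{E}$ and $d_{i}$) into an expression involving only the state-discriminator ratio $c^{\star}(s)/(1-c^{\star}(s))\approx d_{E}(s)/d_{O}(s)$ and the importance ratio $\eta_{i}$, which are both accessible from the offline datasets $\mathcal{D}_{E}$ and $\mathcal{D}_{O}$.
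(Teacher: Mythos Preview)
Your proposal is correct and follows essentially the same route as the paper: expand the KL definition, factor the ratio as $\frac{d_i(s)}{d_O(s)}\cdot\frac{d_O(s)}{d_E(s)}$, and split the log into the two claimed terms. Your explicit discussion of why \Cref{asm:base} is needed to avoid $\log 0$ or $0/0$ on the support of $d_i$ is a welcome addition that the paper leaves implicit.
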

\begin{proof}
	We have
	\begin{eqnarray*}
		D_{\mathrm{KL}}\left(d_{i}(S)||d_{E}(S)\right) & = & \mathbb{E}_{d_{i}(s)}\left[\log\frac{d_{i}(s)}{d_{E}(s)}\right]\\
		& = & \mathbb{E}_{d_{i}(s)}\left[\log\frac{d_{i}(s)}{d_{O}(s)}\cdot\frac{d_{O}(s)}{d_{E}(s)}\right]\\
		& = & -\mathbb{E}_{d_{i}(s)}\left[\log\frac{d_{E}(s)}{d_{O}(s)}\right]+D_{\mathrm{KL}}\left(d_{i}(S)||d_{O}(S)\right)
	\end{eqnarray*}
\end{proof}

\begin{clm}\label{clm:three} 
	Let $\eta_{i}(s,a)=\frac{d_{i}(s,a)}{d_{O}(s,a)}$ for all $(s,a)\in\mathcal{D}_{O}$, and $c^{\star}(s)=\frac{d_{E}(s)}{d_{E}(s)+d_{O}(s)}$ for all $s\in\mathcal{D}_{E}\cup\mathcal{D}_{O}$ 
	\[
	\mathbb{E}_{d_{i}(s)}\left[\log\frac{d_{E}(s)}{d_{O}(s)}\right]\approx\mathbb{E}_{d_{O}(s,a)}\left[\eta_{i}(s,a)\log\frac{c^{\star}(s)}{1-c^{\star}(s)}\right]
	\]
\end{clm}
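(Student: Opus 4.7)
The plan is to reduce the claim to two elementary manipulations: a marginalization-plus-importance-sampling step, and an algebraic identification of $\log\frac{d_E(s)}{d_O(s)}$ with $\log\frac{c^\star(s)}{1-c^\star(s)}$ coming from the definition of the optimal discriminator.

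First, I would observe that since $\log\frac{d_E(s)}{d_O(s)}$ depends only on $s$, we can lift the expectation under $d_i(s)$ to one under the joint $d_i(s,a)$ via
\[
\mathbb{E}_{d_i(s)}\!\left[\log\frac{d_E(s)}{d_O(s)}\right]=\sum_{s,a} d_i(s,a)\log\frac{d_E(s)}{d_O(s)}.
\]
Under \Cref{asm:base} the support of $d_i$ is contained in that of $d_O$ (since in Problem~\ref{eq:main-problem} the occupancies $d_i$ arise from KL-regularisation toward $d_O$), so multiplying and dividing by $d_O(s,a)$ and identifying $\eta_i(s,a)=d_i(s,a)/d_O(s,a)$ yields the importance-sampling rewriting
\[
\mathbb{E}_{d_i(s)}\!\left[\log\frac{d_E(s)}{d_O(s)}\right]=\mathbb{E}_{d_O(s,a)}\!\left[\eta_i(s,a)\log\frac{d_E(s)}{d_O(s)}\right].
\]

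Next, I would use the defining relation $c^\star(s)=\frac{d_E(s)}{d_E(s)+d_O(s)}$ to compute $1-c^\star(s)=\frac{d_O(s)}{d_E(s)+d_O(s)}$, whence
\[
\frac{c^\star(s)}{1-c^\star(s)}=\frac{d_E(s)}{d_O(s)},
\]
and in particular $\log\frac{c^\star(s)}{1-c^\star(s)}=\log\frac{d_E(s)}{d_O(s)}$. Substituting into the displayed identity from the previous step produces the stated expression exactly.

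The ``$\approx$'' in the claim is what turns this from an equality into an approximation: in practice $c^\star$ is not the Bayes-optimal discriminator but only its learned neural-network surrogate (obtained from the pretrained objective in \Cref{alg:dual-force}), so the identity $\log\frac{c^\star}{1-c^\star}=\log\frac{d_E}{d_O}$ holds only up to discriminator error, and additionally $d_O$ itself is replaced by the empirical offline distribution $\mathcal{D}_O$. There is no real obstacle here; the only point that warrants a brief mention is verifying that the support condition needed for the importance-sampling step is implied by \Cref{asm:base} together with the KL regularisation in \eqref{eq:fix-lmbda-problem}, which guarantees that $d_i$ is absolutely continuous with respect to $d_O$ so that $\eta_i$ is well defined $d_O$-almost everywhere.
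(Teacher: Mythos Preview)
Your proposal is correct and follows essentially the same approach as the paper: lift the state expectation to the joint $d_i(s,a)$, perform the importance-sampling rewrite via $\eta_i$, and substitute the discriminator identity $c^\star/(1-c^\star)=d_E/d_O$. The paper's proof is considerably terser, compressing the importance-sampling step and the discriminator substitution into a single $\approx$ line without your explicit discussion of the support condition or the source of the approximation.
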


\begin{proof}
	We have
	\begin{eqnarray*}
		\mathbb{E}_{d_{i}(s)}\left[\log\frac{d_{E}(s)}{d_{O}(s)}\right] & = & \mathbb{E}_{d_{i}(s)}\mathbb{E}_{\pi(a|s)}\left[\log\frac{d_{E}(s)}{d_{O}(s)}\right]=\mathbb{E}_{d_{i}(s,a)}\left[\log\frac{d_{E}(s)}{d_{O}(s)}\right]\\
		& \approx & \mathbb{E}_{d_{O}(s,a)}\left[\eta_{i}(s,a)\log\frac{c^{\star}(s)}{1-c^{\star}(s)}\right]
	\end{eqnarray*}
\end{proof}

\section{Off-policy KL Estimator}\label{app:sec:state-kl-estimate}
Recall that the weight $w_{i}(s,a)$ is defined w.r.t. a fixed dataset
$\mathcal{D}_{0}$ and reads
\[
w_{i}(s,a)=\mathrm{softmax}_{\mathcal{D}_{O}}(\delta_{i}(s,a))=\frac{\exp\{\delta_{i}(s,a)\}}{\sum_{(s^{\prime},a^{\prime})\in\mathcal{D}_{O}}\exp\{\delta_{i}(s^{\prime},a^{\prime})\}},
\]
where the TD error $\delta_{i}(s,a)=R_{i}^{k}(s,a)+\gamma\mathcal{T}V_{i}^{\star}(s,a)-V_{i}^{\star}(s)$.
In contrast, the importance ratio $\eta_{i}(s,a)$ is defined in terms of the expectation of the state-action occupancy $d_{O}$, namely 
\[
\eta_{i}(s,a)=\mathrm{softmax}_{d_{O}(s,a)}(\delta_{i}(s,a))=\frac{\exp\{\delta_{i}(s,a)\}}{\mathbb{E}_{d_{O}(s^{\prime},a^{\prime})}\exp\{\delta_{i}(s^{\prime},a^{\prime})\}}.
\]

\begin{clm}\label{clm:KL-estimator}
	Given an offline dataset $\mathcal{D}_{O}$ sampled
	u.a.r. from state-action occupancy $d_{O}$, an estimator of the importance
	ratio $\eta_{i}(s,a)$ is given by $\widetilde{\eta}_{i}(s,a):=|\mathcal{D}_{O}|w_{i}(s,a)$.    
\end{clm}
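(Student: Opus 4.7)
The plan is to unfold both $\eta_i(s,a)$ and $\widetilde{\eta}_i(s,a) = |\mathcal{D}_O|\, w_i(s,a)$ using the softmax definitions already recorded in the excerpt, and then identify the empirical sum in $w_i$'s denominator as a Monte Carlo estimate of the population expectation appearing in $\eta_i$'s denominator. This turns the claim into a one-line application of the law of large numbers.

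Concretely, I would first multiply $w_i(s,a)$ by $|\mathcal{D}_O|$ and rewrite the denominator as a sample mean, giving
\[
\widetilde{\eta}_i(s,a) \;=\; \frac{\exp\{\delta_i(s,a)\}}{\tfrac{1}{|\mathcal{D}_O|}\sum_{(s',a')\in\mathcal{D}_O}\exp\{\delta_i(s',a')\}}.
\]
The numerator coincides exactly with that of $\eta_i(s,a)$, so the proof reduces to comparing the two denominators. Next, since $\mathcal{D}_O$ is by assumption sampled i.i.d. from $d_O$, the strong law of large numbers yields
\[
\frac{1}{|\mathcal{D}_O|}\sum_{(s',a')\in\mathcal{D}_O}\exp\{\delta_i(s',a')\} \;\xrightarrow{a.s.}\; \mathbb{E}_{d_O(s',a')}\bigl[\exp\{\delta_i(s',a')\}\bigr],
\]
provided this expectation is finite. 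Finiteness is implicit in the DICE formulation, because the normalizing constant in the closed-form primal $\eta_i = \mathrm{softmax}_{d_O(s,a)}(\delta_i)$ in \eqref{eq:dopt} is exactly this expectation, and the existence of the optimum of Problem~\eqref{eq:fix-lmbda-problem} requires it to be well-defined.

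Combining these two observations gives $\widetilde{\eta}_i(s,a) \to \eta_i(s,a)$ almost surely as $|\mathcal{D}_O|\to\infty$, so $\widetilde{\eta}_i$ is a consistent Monte Carlo estimator of $\eta_i$. I do not anticipate a real obstacle: the statement is essentially a normalization-constant bookkeeping, whose content is that the empirical softmax over $\mathcal{D}_O$, rescaled by $|\mathcal{D}_O|$, calibrates correctly to the population softmax under $d_O$. The only subtlety worth flagging in the write-up is the integrability of $\exp\{\delta_i\}$, which is inherited from the existence of the DICE primal solution.
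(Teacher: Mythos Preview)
Your proposal is correct and matches the paper's own argument essentially line for line: both unfold the softmax definitions, observe the numerators coincide, and replace the population expectation $\mathbb{E}_{d_O}\exp\{\delta_i\}$ in the denominator by the empirical average $\tfrac{1}{|\mathcal{D}_O|}\sum_{(s',a')\in\mathcal{D}_O}\exp\{\delta_i(s',a')\}$. The paper simply writes ``$\approx$'' for this step, whereas you add the (harmless) extra rigor of invoking the law of large numbers and noting the integrability caveat.
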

\begin{proof}
	Combining $\frac{1}{|\mathcal{D}_{O}|}\sum_{(s^{\prime},a^{\prime})\in\mathcal{D}_{O}}\exp\{\delta_{i}(s^{\prime},a^{\prime})\}$
	is an estimator of the expectation $\mathbb{E}_{d_{O}(s^{\prime},a^{\prime})}\exp\{\delta_{i}(s^{\prime},a^{\prime})\}$
	and the definition of weight $w_{i}(s,a)$ we have 
	\begin{eqnarray*}
		\eta_{i}(s,a)&=&\frac{\exp\{\delta_{i}(s,a)\}}{\mathbb{E}_{d_{O}(s^{\prime},a^{\prime})}\exp\{\delta_{i}(s^{\prime},a^{\prime})\}}\\&\approx&\frac{\exp\{\delta_{i}(s,a)\}}{\frac{1}{|\mathcal{D}_{O}|}\sum_{(s^{\prime},a^{\prime})\in\mathcal{D}_{O}}\exp\{\delta_{i}(s^{\prime},a^{\prime})\}}=|\mathcal{D}_{O}|w_{i}(s,a)=\widetilde{\eta}_{i}(s,a).    
	\end{eqnarray*}
\end{proof}

\begin{lem}\label{lem:KL-estimator}
	The KL-divergence $D_{\mathrm{KL}}(d_{i}(S,A)||d_{O}(S,A))$
	admits the following off-policy estimator
	\[
	\log|\mathcal{D}_{O}|\,\,+\,\sum_{(s,a)\in\mathcal{D}_{O}}w_{i}(s,a)\log w_{i}(s,a).    
	\]
\end{lem}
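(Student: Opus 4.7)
The plan is to rewrite the KL-divergence as an expectation under the offline occupancy $d_O$ via a standard change of measure, then replace the population-level importance ratios with their finite-sample surrogates from \Cref{clm:KL-estimator}, and finally expand the logarithm using the fact that $w_i$ sums to one over $\mathcal{D}_O$.

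First, I would use the definition $\eta_i(s,a) = d_i(s,a)/d_O(s,a)$ together with a change of measure to write
\begin{equation*}
D_{\mathrm{KL}}(d_i(S,A)\,\|\,d_O(S,A)) \;=\; \mathbb{E}_{d_i(s,a)}[\log \eta_i(s,a)] \;=\; \mathbb{E}_{d_O(s,a)}[\eta_i(s,a)\log \eta_i(s,a)].
\end{equation*}
Since $\mathcal{D}_O$ consists of i.i.d.\ samples from $d_O$, the natural Monte Carlo estimator of an expectation $\mathbb{E}_{d_O(s,a)}[g(s,a)]$ is $\tfrac{1}{|\mathcal{D}_O|}\sum_{(s,a)\in\mathcal{D}_O} g(s,a)$. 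Applying this with $g(s,a) = \eta_i(s,a)\log \eta_i(s,a)$ and invoking \Cref{clm:KL-estimator} to substitute $\eta_i(s,a) \approx \widetilde{\eta}_i(s,a) = |\mathcal{D}_O|\, w_i(s,a)$ gives
\begin{equation*}
D_{\mathrm{KL}}(d_i(S,A)\,\|\,d_O(S,A)) \;\approx\; \frac{1}{|\mathcal{D}_O|}\sum_{(s,a)\in\mathcal{D}_O} \bigl(|\mathcal{D}_O|\,w_i(s,a)\bigr)\log\bigl(|\mathcal{D}_O|\,w_i(s,a)\bigr).
\end{equation*}

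Next I would simplify: the factor $1/|\mathcal{D}_O|$ cancels one $|\mathcal{D}_O|$, and splitting $\log(|\mathcal{D}_O|\,w_i) = \log|\mathcal{D}_O| + \log w_i$ yields
\begin{equation*}
\log|\mathcal{D}_O|\sum_{(s,a)\in\mathcal{D}_O} w_i(s,a) \;+\; \sum_{(s,a)\in\mathcal{D}_O} w_i(s,a)\log w_i(s,a).
\end{equation*}
Because $w_i(s,a)$ is defined as a softmax over $\mathcal{D}_O$, the weights form a probability distribution on $\mathcal{D}_O$, so $\sum_{(s,a)\in\mathcal{D}_O} w_i(s,a) = 1$, and the claimed estimator $\log|\mathcal{D}_O| + \sum_{(s,a)\in\mathcal{D}_O} w_i(s,a)\log w_i(s,a)$ follows.

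There is no serious obstacle here; the only subtlety worth flagging is that the estimator inherits both approximations present upstream — the Monte Carlo replacement of the outer expectation over $d_O$ and the plug-in step $\eta_i \approx \widetilde{\eta}_i$ from \Cref{clm:KL-estimator}, which itself uses the same sample average in the denominator of the softmax. Since both approximations are already accepted at the level at which the preceding lemma/corollary are stated, the argument reduces to the short chain of substitutions above.
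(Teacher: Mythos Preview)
Your proposal is correct and follows essentially the same route as the paper's own proof: write the KL as $\mathbb{E}_{d_O}[\eta_i\log\eta_i]$, replace the expectation by the empirical average over $\mathcal{D}_O$, substitute $\widetilde{\eta}_i=|\mathcal{D}_O|\,w_i$ via \Cref{clm:KL-estimator}, and simplify using $\sum_{(s,a)\in\mathcal{D}_O}w_i(s,a)=1$. Your explicit remark that $w_i$ is a softmax (hence sums to one) makes the last step a bit more transparent than in the paper, but the argument is otherwise identical.
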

\begin{proof}
	Combining the definition of $\eta_{i}(s,a)=d_{i}(s,a)/d_{O}(s,a)$ and Claim~\ref{clm:KL-estimator}, we have
	\begin{eqnarray*}
		D_{\mathrm{KL}}\left(d_{i}(S,A)||d_{O}(S,A)\right) & = & \mathbb{E}_{d_{i}(s,a)}\log\eta_{i}(s,a)\\
		& = & \mathbb{E}_{d_{O}(s,a)}\eta_{i}(s,a)\log\eta_{i}(s,a)\\
		& \approx & \frac{1}{|\mathcal{D}_{O}|}\sum_{(s,a)\in\mathcal{D}_{O}}\widetilde{\eta}_{i}(s,a)\log\widetilde{\eta}_{i}(s,a)\\
		& = & \sum_{(s,a)\in\mathcal{D}_{O}}w_{i}(s,a)\log\left(|\mathcal{D}_{O}|w_{i}(s,a)\right)\\
		& = & \log|\mathcal{D}_{O}|\,\,+\,\sum_{(s,a)\in\mathcal{D}_{O}}w_{i}(s,a)\log w_{i}(s,a)
	\end{eqnarray*}
\end{proof}

\section{Successor Features as Diversity Measure}

\begin{lem}[Convex Diversity Objective]\label{lem:div_is_conv}
	Let $\Phi\in\mathbb{R}^{d\times(S\times A)}$ be a feature map and $d_{i}\in\triangle^{S\times A}$ be a probability distribution.
	Then for the feature vector $\psi_{i}=\Phi d_{i}\in\mathbb{R}^{d}$ we have 
	\[
	\nabla_{d_{i}}\frac{1}{2}\Vert\psi_{i}-\psi_{j}\Vert_{2}^{2}=\Phi^{T}(\psi_{i}-\psi_{j}).
	\]
	Further, the corresponding Hessian is positive semi-definite matrix, i.e.,
	\[
	\nabla_{d_{i}}\Phi^{T}\Phi(d_{i}-d_{j})=\Phi^{T}\Phi\succeq0.
	\]
	In particular, $\frac{1}{2}\Vert\Phi d_{i}-\Phi d_{j}\Vert_{2}^{2}$ is a convex function w.r.t. $d_{i}$.
\end{lem}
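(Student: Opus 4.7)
The plan is to reduce all three claims to standard matrix-calculus manipulations on the quadratic form induced by $\Phi^T\Phi$. First I would rewrite the objective as a quadratic in $d_i$ by expanding
\[
\tfrac{1}{2}\|\psi_i-\psi_j\|_2^2 \;=\; \tfrac{1}{2}(\Phi d_i - \Phi d_j)^T(\Phi d_i - \Phi d_j) \;=\; \tfrac{1}{2}(d_i-d_j)^T \Phi^T\Phi\, (d_i-d_j).
\]
Treating $d_j$ as fixed (since we differentiate w.r.t.\ $d_i$), this is simply a shifted quadratic form with matrix $\Phi^T\Phi$.

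Next I would compute the gradient. By the standard rule for quadratic forms (or by applying the chain rule to $\psi_i = \Phi d_i$), the gradient equals $\Phi^T\Phi(d_i - d_j)$. Pulling $\Phi$ back through the difference yields $\Phi^T(\Phi d_i - \Phi d_j) = \Phi^T(\psi_i - \psi_j)$, which is exactly the first identity. Differentiating once more with respect to $d_i$ gives the Hessian $\Phi^T\Phi$, proving the second identity.

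For the PSD claim, I would argue directly from the definition: for any vector $v$ of appropriate dimension, $v^T\Phi^T\Phi\, v = \|\Phi v\|_2^2 \geq 0$, so $\Phi^T\Phi \succeq 0$ as a Gram matrix. Convexity of $\tfrac{1}{2}\|\Phi d_i - \Phi d_j\|_2^2$ in $d_i$ then follows immediately from the second-order characterization of convexity (nonnegative Hessian everywhere on the ambient space $\mathbb{R}^{S\times A}$, hence on the convex subset $\triangle^{S\times A}$).

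There is no genuine obstacle here; the result is a routine exercise. The only small care point is to keep the roles of $d_i$ and $d_j$ distinct when differentiating, and to note that the simplex constraint $d_i \in \triangle^{S\times A}$ does not affect the unconstrained computations of gradient and Hessian in the ambient vector space, so convexity on the simplex is inherited from convexity on $\mathbb{R}^{S\times A}$.
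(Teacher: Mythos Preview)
Your proposal is correct and follows essentially the same approach as the paper: both expand the objective as the quadratic form $\tfrac{1}{2}(d_i-d_j)^T\Phi^T\Phi(d_i-d_j)$, differentiate to obtain the gradient $\Phi^T\Phi(d_i-d_j)=\Phi^T(\psi_i-\psi_j)$ and the Hessian $\Phi^T\Phi$, and conclude convexity from positive semidefiniteness. The only cosmetic difference is that the paper first writes out the gradient coordinate-by-coordinate before assembling the matrix form, whereas you invoke the standard quadratic-form rule directly; your explicit Gram-matrix justification of $\Phi^T\Phi\succeq 0$ and remark on the simplex constraint are small additions the paper leaves implicit.
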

\begin{proof}
	Observe that
	\begin{eqnarray*}
		\nabla_{d_{i}(s,a)}\frac{1}{2}\sum_{\ell=1}^{n}(\Phi_{\ell,:}d_{i}-\Phi_{\ell,:}d_{\pi_{2}})^{2} & = & \sum_{\ell=1}^{n}(\Phi_{\ell,:}d_{i}-\Phi_{\ell,:}d_{j})[\phi(s,a)]_{\ell}\\
		& = & \left(\sum_{\ell=1}^{n}\Phi_{\ell,:}[\phi(s,a)]_{\ell}\right)(d_{i}-d_{j})\\
		& = & \phi(s,a)^{T}\Phi(d_{i}-d_{j})\\
		& = & \phi(s,a)^{T}(\psi_{i}-\psi_{j})
	\end{eqnarray*}
	Hence, we have
	\begin{eqnarray*}
		\nabla_{d_{i}}\frac{1}{2}\Vert\psi_{i}-\psi_{j}\Vert_{2}^{2} & = & \nabla_{d_{i}}\frac{1}{2}\Vert\Phi d_{i}-\Phi d_{j}\Vert_{2}^{2}\\
		& = & \nabla_{d_{i}}\frac{1}{2}\sum_{\ell=1}^{n}(\Phi_{\ell,:}d_{i}-\Phi_{\ell,:}d_{j})^{2}\\
		& = & \sum_{\ell=1}^{n}\Phi_{\ell,:}(d_{i}-d_{j})\Phi_{\ell,:}^{T}\\
		& = & \left[\sum_{\ell=1}^{n}\Phi_{\ell,:}^{T}\Phi_{\ell,:}\right](d_{i}-d_{j})\\
		& = & \Phi^{T}\Phi(d_{i}-d_{j})\\
		& = & \Phi^{T}(d_{i}-d_{j})
	\end{eqnarray*}
	and
	\[
	\nabla_{d_{i}}\Phi^{T}\Phi(d_{i}-d_{j})=\Phi^{T}\Phi.
	\]
\end{proof}

\section{Training Metrics}\label{app:diversity_constraint}

\begin{figure}[H]
    \setlength{\gridspace}{1pt}
    \newcommand\scale{0.215}
    \setlength{\fboxrule}{1pt}
    \begin{tabular}{cccc}
    
    % Title for Locomotion
    \multicolumn{4}{c}{\textbf{Locomotion Task}} \\[0.5em] % Adds space between the images and the title
    
    \includegraphics[width=\scale\linewidth]{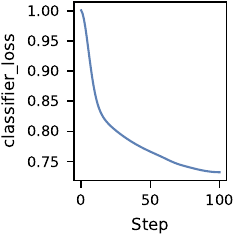} &
    
    \includegraphics[width=\scale\linewidth]{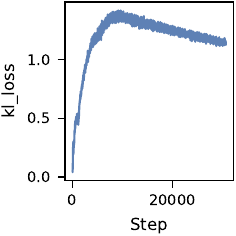} &
    
    \includegraphics[width=\scale\linewidth]{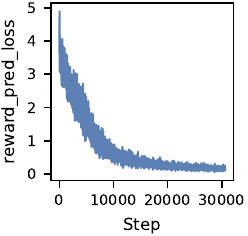} &
    
    \includegraphics[width=\scale\linewidth]{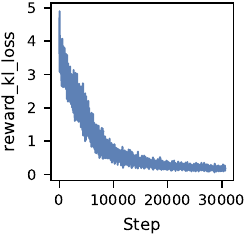} \\

    % 2
    \includegraphics[width=\scale\linewidth]{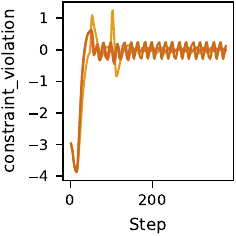} &
    
    \includegraphics[width=\scale\linewidth]{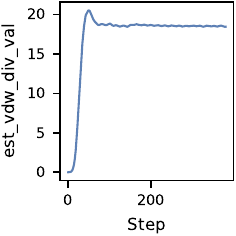} & 
    
    \includegraphics[width=\scale\linewidth]{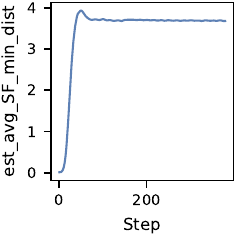} &
    
    \includegraphics[width=\scale\linewidth]{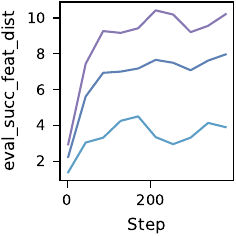} \\[0.5em]
    
    % Title for Locomotion
    \multicolumn{4}{c}{\textbf{Obstacle Navigation Task}} \\[0.5em] % Adds space between the images and the title
    
    % 3
    \includegraphics[width=\scale\linewidth]{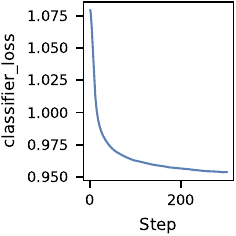} &
    
    \includegraphics[width=\scale\linewidth]{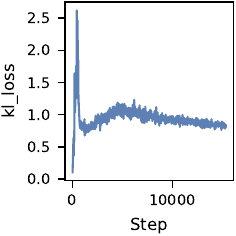} &
    
    \includegraphics[width=\scale\linewidth]{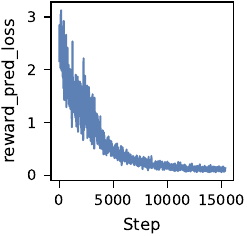} &
    
    \includegraphics[width=\scale\linewidth]{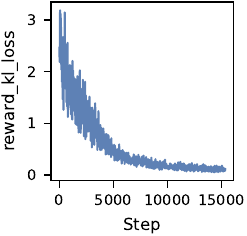} \\

    % 4
    \includegraphics[width=\scale\linewidth]{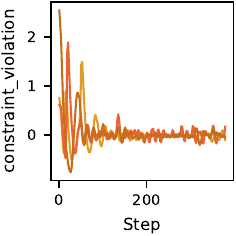} &
    
    \includegraphics[width=\scale\linewidth]{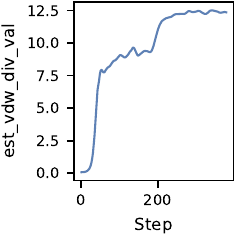} & 

    \includegraphics[width=\scale\linewidth]{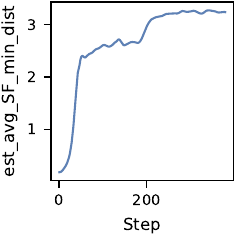} &
    
    \includegraphics[width=\scale\linewidth]{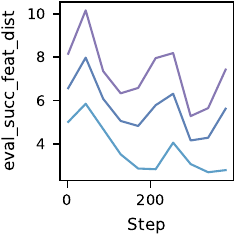} \\

    \end{tabular}
    \caption{Training metrics of State Discriminator, Functional Reward Encoding, and \textsc{Dual-Force} algorithm.}
    \label{fig:locom_box_wnb}
\end{figure}

In \cref{fig:locom_box_wnb}, we report important metrics computed during the training of the state-discriminator $c^{\star}$, the Functional Reward Encoding $\mathcal{F}$, and the \textsc{Dual-Force} algorithm.
In particular, the four subplots in the first row show:
(1) a training loss of the state-discriminator $c^{\star}$; and (2,3,4) training losses of the Functional Reward Encoding.
The subplots in the second row show four evaluation metrics of the \textsc{Dual-Force} algorithm:
(5) an estimated constraint violations of the three state-action occupancies {\color{ourorange} $d_1$}, {\color{ourbrown} $d_2$}, and {\color{ourred} $d_3$}; 
(6,7) Van der Waals diversity measures -- an estimated VdW objective value (see \cref{eq:vdw_force}) and an estimated average of minimum $\ell_2^2$ distance between successor features of distinct {\color{ourorange} $d_1$}, {\color{ourbrown} $d_2$}, and {\color{ourred} $d_3$} (see \cref{eq:rep_force}); and (8) an online evaluation of $\{$minimum, average, maximum$\}$ successor features $\ell_2^2$ distance among {\color{ourorange} $d_1$}, {\color{ourbrown} $d_2$}, and {\color{ourred} $d_3$}.

In the VdW diversity objective, the state space is projected onto different components depending on the task at hand (locomotion or obstacle navigation).
For the locomotion task, the state space is projected onto the joint position components, represented by a 12-dimensional vector (see \cref{app:solo12_state_space}), which serves as a proxy for body height, as this information is missing from the offline dataset.
For the obstacle navigation task, the state space is projected onto the ``base linear velocity'' and ``base angular velocity'' components, each represented by a 3-dimensional vector, as most trajectories in the offline dataset have similar body heights on the ground, but approach the obstacle from different directions and with different velocities.

\textbf{Locomotion.}
The offline dataset is composed of 248,751 expert and 996,000 behavior transitions, in total 1,244,751.
The state space has 48-dimensions, the VdW diversity objective projects the state space onto the ``joint positions'' (12-dimensional vector), the target minimum distance between SFs of learned skills is $\ell_0 = 6.0$, and the target imitation constraint threshold is $\epsilon = 4.0$.

\textbf{Obstacle Navigation.}
The offline dataset is composed of 654,501 expert and 749,501 behavior transitions, in total 1,404,002.
The state space has 171-dimensions, the VdW diversity objective projects the state space onto the ``base linear velocity'' and ``base angular velocity'' (each with 3-dimensions), the target minimum distance between SFs of learned skills is $\ell_0 = 4.0$, and target imitation constraint threshold is $\epsilon = 1.0$.

\section{\envsolo State Space}\label{app:solo12_state_space}

For fair comparison and consistency, in terms of quality and diversity of learned skills, our experiments closely follow the setup in \citep{vlastelica2024doi} and use offline datasets whose collection process is described in Section G ``SOLO-12 Dataset Collection'' of their work.

\textbf{State Space.} 
For the locomotion task, the state space has 48 dimensions:

$\,\,$ [3-dims each] ``base linear velocity'', ``base angular velocity'', ``projected gravity'', ``commanded velocity'';

$\,\,$ [12-dims each] ``joint positions'', ``joint velocity'', ``previous action''.

The ``joint positions'' components capture the movements of each of the four legs: front left, front right, hind left, and hind right - and includes hip abduction/adduction, hip flexion/extension, and knee flexion/extension. 
The ``joint velocity'' components have similar semantics to the ``joint positions''.
The ``previous action'' encodes the target joint positions from the previous time step.

For the obstacle navigation task, the state space contains 171 dimensions:
\vspace{-8pt}
\begin{itemize}
    \setlength{\itemsep}{3pt} % Change to desired spacing
    \setlength{\parskip}{3pt} % Change to desired spacing
    \item the above locomotion state with 48 dimensions;
    \item ``surrounding height map'' of the robot with 121 dimensions;
    \item ``remaining time'' until the end of the episode with 1 dimension.
\end{itemize}

\section{Pre-training of Functional Reward Encoding}\label{app:FRE}

We pretrain the FRE model following the approach of \citet{frans24fre}.
For both the locomotion task and the obstacle navigation task, to ensure wide diversity of general unsupervised reward functions, we generate a list of rewards as follows:
\vspace{-8pt}
\begin{itemize}
    \setlength{\itemsep}{3pt} % Change to desired spacing
    \setlength{\parskip}{3pt} % Change to desired spacing
    \item[$30\times$] linear functions with random weights
    \item[$30\times$] two-layer perceptron (MLP) neural networks with random weights and hidden units in $[(128,64), (128,128), (256,128), (256,256), (512,256), (512,512)]$
    \item[$27\times$] combination of simple human-engineered rewards that incentivize constant linear and angular velocity in different directions, and different joint angle heights.
\end{itemize}
\vspace{-5pt}
It is important to note that the above FRE latent representation does not affect the diversity objective itself; it serves as an index that assigns a stable code to each non-stationary reward, enabling FRE-conditioned value/policy learning and skill recall.

\end{document}